\documentclass{article}

\usepackage{PRIMEarxiv}
\usepackage{dsfont}

\def\bbE{\mathbb{E}}

\def\bbP{\mathbb{P}}

\def\bbR{\mathbb{R}}

\def\cA{\mathcal{A}}

\def\cE{\mathcal{E}}

\def\cO{\mathcal{O}}

\def\cX{\mathcal{X}}

\def\det{\mathrm{det}}

\def\exp{\mathrm{exp}}

\usepackage{natbib}
\usepackage{amsmath}
\usepackage{amsthm}
\usepackage{amssymb}
\usepackage{algorithmic}
\usepackage{algorithm}

\usepackage{amsmath}
\usepackage{amssymb}
\usepackage{mathtools}
\usepackage{amsthm}
\usepackage{accents}
\usepackage{bbm}
\usepackage{multicol,lipsum,xparse}

\DeclareMathOperator{\spn}{span}

\usepackage[utf8]{inputenc} 
\usepackage[T1]{fontenc}    
\usepackage[hidelinks]{hyperref}
\usepackage{url}            
\usepackage{booktabs}       
\usepackage{amsfonts}       
\usepackage{nicefrac}       
\usepackage{microtype}      
\usepackage{lipsum}
\usepackage{fancyhdr}       
\usepackage{graphicx}       
\graphicspath{{media/}}     

\theoremstyle{plain}
\newtheorem{theorem}{Theorem}[section]
\newtheorem{proposition}[theorem]{Proposition}
\newtheorem{lemma}[theorem]{Lemma}
\newtheorem{corollary}[theorem]{Corollary}
\theoremstyle{definition}
\newtheorem{definition}[theorem]{Definition}

\theoremstyle{remark}

\pagestyle{fancy}
\thispagestyle{empty}
\rhead{ \textit{ }} 

\fancyhead[LO]{Running Title for Header}

\title{Optimal Thresholding Linear Bandit}

\author{
  Eduardo Ochoa Rivera, Ambuj Tewari \\
  University of Michigan, \\
  Ann Arbor, USA \\
  \today
}

\begin{document}
\maketitle

\begin{abstract}
We study a novel pure exploration problem: the $\epsilon$-Thresholding Bandit Problem (TBP) with fixed confidence in stochastic
linear bandits. We prove a lower bound for the sample complexity and extend an algorithm designed for Best Arm Identification in the linear case to TBP that is asymptotically optimal.
\end{abstract}


\section{Introduction}
\label{introduction}

The Thresholding Bandit Problem (TBP) \cite{locatelli2016thresbandit, kano2018thresbandit} represents a specific combinatorial pure exploration problem \cite{chen2013combinatorialbandit} within the stochastic bandit setting. In this problem, the learner's objective is to identify all arms whose mean reward is above a specified threshold $\rho \in \bbR$.

The study by \citet{kano2018thresbandit} emphasizes that in certain contexts, such as personalized recommendations, pursuing the absolute best option might not always align with the most advantageous objective. Rather, the pursuit of viable alternatives could yield greater benefits. This concept extends to domains like scientific discoveries, such as in chemical reactions, where the primary focus might not solely revolve around identifying the single best reaction, but rather maximizing the discovery of multiple favorable reactions. Given the perceived value in practical applications, our objective is to advance our understanding of TBP, particularly within the framework of structured bandits.

On the other hand, the stochastic linear bandit \cite{auer2002linearbandit} is a sequential decision-making problem that extends the classical stochastic Multi-Armed Bandit \cite{robbins1952bandit}. In this scenario, the arms' mean rewards follow a linear model with unknown parameters. While much of the analysis of this problem has focused on the regret minimization objective \cite{rusmevichientong2010linearbandit, abbasiyadkori2011linearbandit}, some research has addressed it within the context of the best-arm identification (BAI) problem \cite{soare2014bailinear, soare2015bailinear, jedra2020optbailinear, degenne2020pureexp}.

\subsection{Contributions}
In this paper we explore the understudied $\epsilon$-TBP in stochastic linear bandits and make the following key contributions.
\begin{itemize}
    \item We prove an instance-specific lower bound for the expected sample complexity of any correct algorithm. 

    \item We extend the algorithm of \citet{jedra2020optbailinear} to the TBP and prove it is asymptotically optimal. The main difference is that their algorithm focuses on sampling arms to reduce uncertainty, particularly for arms that are harder to distinguish from the optimal arm (i.e., arms with mean rewards very close to each other) whereas ours pays attention to arms with mean rewards close to the threshold $\rho$.

    \item Moreover, we are able to modify the algorithm for the $\epsilon$-relaxed case. While Track and Stop algorithms can fail for problems with multiple correct solutions if the set of optimal solutions is not convex, as discussed by \citet{degenne2019multcorr}, we demonstrate that the set of optimal proportions is convex and hencwe we can guarantee asymptotic optimality.
    
\end{itemize}

\subsection{Related work}

In the context of the combinatorial pure exploration (CPE) problem, the learner's goal is to explore a set of arms to identify the optimal member of a decision class.  \citet{chen2013combinatorialbandit} introduced general algorithms to address CPE in both fixed confidence and fixed budget settings. \citet{locatelli2016thresbandit} provided a formal definition of the Thresholding Bandit Problem (TBP), along with a summary of lower and upper bounds in the fixed confidence and budget settings. Notably, they presented an algorithm that matches the lower bound of the error probability up to multiplicative constants, and additive $\log (TK)$ terms, in the exponential. 

In the realm of fixed confidence, \citet{kano2018thresbandit} formulated the Good Arm Identification (GAI) problem, where the learner selects good arms, defined as arms with mean rewards above a specified threshold, one by one.

\citet{soare2014bailinear} was the first to introduce Best-Arm Identification (BAI) in the stochastic linear setting and derived an instance-specific information-theoretical lower bound for the expected sample complexity. They offered static and adaptive algorithms inspired by G-optimal experimental design. \citet{tao2018bestarm} introduced an elimination-based algorithm that improved upon the previous upper bound established by \citet{soare2014bailinear}. In \citet{xu2018fully}, they designed the first gap-based algorithm for BAI in the linear case, drawing inspiration from UGapE \cite{gabillon2012bestarm}.

Furthermore, \citet{jedra2020optbailinear} presented an asymptotically optimal algorithm based on the classic Track and Stop algorithm for BAI in the Multi-Armed Bandit (MAB) setting \cite{kaufmann2014bai}. Meanwhile, \citet{degenne2020pureexp} introduced a general pure exploration formulation for the linear case, focusing on single answers. They provided the asymptotic lower bound for sample complexity and an asymptotically optimal algorithm. Their approach treats the problem as a two-player zero-sum game, with the algorithm solving a saddle point to approximate the optimal proportions determined by the problem's complexity.

\section{Problem Formulation}
\label{promblemform}

Consider a set of arms $\mathcal{A} = {1, 2, 3, \ldots, K}$, each associated with a feature vector $x_a \in \mathcal{X} = \{x_1, x_2, \ldots, x_K\} \subset \mathbb{R}^d$, $\spn(\mathcal{X}) = \mathbb{R}^d$, and an unknown parameter vector $\theta \in \mathbb{R}^d$. We assume that $||x_a|| < L$ for all $a \in \mathcal{A}$, where $L > 0$. In round $t \geq 1$, the decision maker selects arm $a_t = a \in \mathcal{A}$ and she observes reward $r_t = x_{a}^T \theta + \eta_t$, where $\eta_t \sim N(0, \sigma^2)$ and $\eta_t$ is independent of the past observations. We denote the history as $\mathcal{H}_t = (a_1, x_{a_1}, r_1, \dots, a_{t-1}, x_{a_{t-1}}, r_{t-1})$ and the filtration generated for this history as $\mathcal{F}_t = \sigma(\mathcal{H}_t)$. 

A pure exploration algorithm comprises a sampling rule $\pi = \{\pi_t \}$, a stopping rule $\tau$ and a decision rule $\phi$. The sampling rule determines the action taken in round $t$ and can be randomized. Therefore, if we denote $\mathcal{D}(\mathcal{A})$ the set of all the distributions
over $\mathcal{A}$, each $\pi_t$ is a mapping from history $\mathcal{H}_t$ to $\mathcal{D}(\mathcal{A})$ , i.e. $\pi_t$ is $\mathcal{F}_t$ measurable and $\pi_t(\mathcal{H}_t) = \pi_t \in  \mathcal{D}(\mathcal{A})$. The stopping rule $\tau$ is the time when the learner stops sampling and makes the decision. Thus, $\tau$ has to be a stopping time with respect to ${\mathcal{F}_t}$. The decision rule $\phi$ is $\mathcal{F}_{\tau}$ measurable and it is a subset of $\mathcal{A}$ according with some optimality criteria. 

In our case, given a threshold $\rho \in \mathbb{R}^d$ and a fixed confidence $\delta \in (0, 1)$, the objective of the learner is to find all the arms with mean reward above $\rho$ in the least time possible with probability at least $1-\delta$. Formally, let $\Pi_{\rho}(\theta) = \{ a \in \mathcal{A} : x_a^T \theta > \rho \}$. We will omit the subscript $\rho$ when it is clear $\Pi_{\rho}(\theta) = \Pi(\theta)$.

\begin{definition}\label{def:def_de_corr} We say that the tuple $(\pi, \tau, \phi)$ is $\delta$-correct if:

$$\mathbb{P}[\Pi_{\rho}(\theta) = \phi] \geq 1 - \delta$$

and 

$$\mathbb{P}[\tau < \infty] = 1$$

\end{definition} 

We can also introduce a relaxation of the threshold condition, permitting the misidentification of arms within an $\epsilon$ radius, where $\epsilon > 0$ , around the threshold.

We can also relax the threshold condition, allowing arms to be misidentified within an $\epsilon$ radius, where $\epsilon > 0$, around the threshold.

\begin{definition}\label{def:def_d_corr}  We say that the tuple $(\pi, \tau, \phi)$ is $(\epsilon, \delta)$-correct if:

$$\mathbb{P}[\Pi_{\rho+\epsilon}(\theta) \subseteq \phi \land \Pi_{\rho-\epsilon}(\theta)^{\mathsf{c}} \subseteq \phi^{\mathsf{c}}] \geq 1 - \delta$$

and 

$$\mathbb{P}[\tau < \infty] = 1$$

\end{definition} 

As we will see, relaxing the threshold criteria reduces the complexity of the problem. We will use the natural choice for the decision rule $\phi = \Pi(\widehat{\theta})$, where $\widehat{\theta} = \widehat{\theta}_{\tau}$ is the least squares estimator of $\theta$ at the stopping time.

\section{Sample Complexity Lower Bound}
\label{sclb}

\subsection{Geometric Intuition}\label{sec:sclb:geom_int}

\textbf{Stopping rule}

\begin{figure}[ht]
\vskip 0.2in
\begin{center}
\centerline{\includegraphics[width=.5\columnwidth]{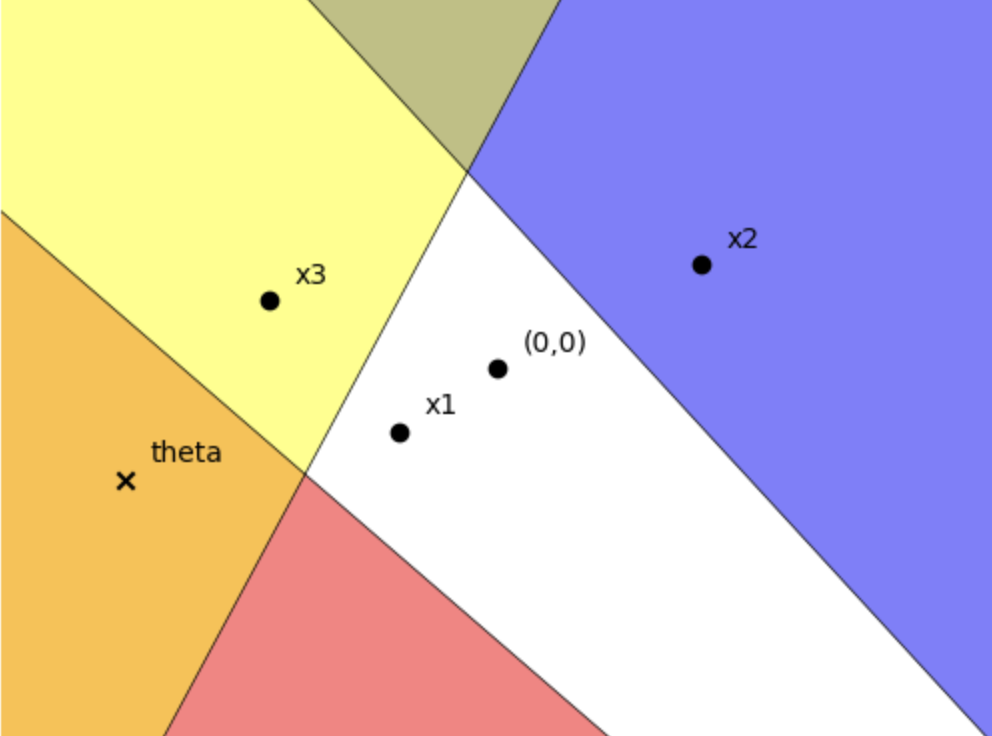}}
\caption{The halved planes represent the value of $\theta$ that makes the corresponding point to be above the threshold $\rho$. The red on corresponds to $x_1$, the blue one to $x_2$ and the yellow one to $x_3$.}
\label{geom_intuition}
\end{center}
\vskip -0.2in
\end{figure}

For the sake of gaining intuition about how well an algorithm can perform, we assume we can use the knowledge of the true $\theta^*$ to construct confidence set and then define the sample and stopping rules. This will show that even an ``oracle" strategy needs a minimum sample size to be confident enough about the answer. Furthermore, it will help to understand how TBP it is different from the BAI in the linear case.

First, following \citet{soare2014bailinear}, let $\mathcal{C}(x) = \{ \theta \in \mathbb{R}^d: x^T \theta > \rho\}$ the set of parameters $\theta$ that makes $x$ have a mean reward above $\rho$. This sets are open half spaces. 

In contrast to \citet{soare2014bailinear}, our definition of $\mathcal{C}(x)$ does not form a partition of $\mathbb{R}^d$. Instead, the true parameter $\theta^{*}$ resides at the intersection of $\mathcal{C}(x)$ for all the "good arms", i.e. $\theta^* \in \cap_{x \in \Pi(\theta^*)} \mathcal{C}(x) = \tilde{\mathcal{C}} $. Furthermore, $\theta^* \in \cap_{x \in \Pi(\theta^*)^{\mathsf{c}}} \mathcal{C}(x)^{\mathsf{c}} = \underaccent{\tilde}{\mathcal{C}}$. Note that all $\theta \in \tilde{\mathcal{C}} \cap \underaccent{\tilde}{\mathcal{C}}$ have the same $\Pi(\theta) = \Pi(\theta^*)$. 

As in \citet{soare2014bailinear}, we assume that for any allocation $\mathbf{x}_n = (x_1, x_2, \dots, x_n)$, it is possible to construct a confidence set $\mathcal{S}^*\left(\mathbf{x}_n\right) \subseteq \mathbb{R}^d$ such that $\theta^* \in \mathcal{S}^*\left(\mathbf{x}_n\right)$ and the (random) OLS estimate $\hat{\theta}_n$ belongs to $\mathcal{S}^*\left(\mathbf{x}_n\right)$ with high probability, i.e., $\mathbb{P}(\hat{\theta}_n \in \mathcal{S}^*(\mathbf{x}_n)) \geq 1-\delta$. As a result, the oracle stopping criterion simply checks whether the
confidence set $\mathcal{S}^*\left(\mathbf{x}_n\right)$ is contained in both $\tilde{\mathcal{C}}$ and $\underaccent{\tilde}{\mathcal{C}}$. That means, for all the probable values of $\widehat{\theta}_n$, there is no ambiguity about the arms that are above/below the threshold.

\begin{proposition}\label{prop:propgeom}

    Let $\mathbf{x}_n$ be an allocation such that $\mathcal{S}^*(\mathbf{x}_n) \subseteq \tilde{\mathcal{C}} \cap \underaccent{\tilde}{\mathcal{C}}$, then 
    
    $$\mathbb{P}[\Pi(\hat{\theta}_n) = \Pi(\theta^*)] \geq 1 - \delta$$

\end{proposition}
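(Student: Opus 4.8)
The plan is to reduce the probabilistic statement to a purely deterministic fact about index sets, after which the high-probability guarantee already assumed for the confidence set closes the argument with essentially no extra work. The governing observation is that the event $\{\hat{\theta}_n \in \mathcal{S}^*(\mathbf{x}_n)\}$ forces $\hat{\theta}_n$ into $\tilde{\mathcal{C}} \cap \underaccent{\tilde}{\mathcal{C}}$ by the hypothesis $\mathcal{S}^*(\mathbf{x}_n) \subseteq \tilde{\mathcal{C}} \cap \underaccent{\tilde}{\mathcal{C}}$, and every parameter lying in this intersection reproduces exactly the index set $\Pi(\theta^*)$. So the whole proposition hinges on the deterministic equality $\Pi(\theta) = \Pi(\theta^*)$ on $\tilde{\mathcal{C}} \cap \underaccent{\tilde}{\mathcal{C}}$, which is the claim already asserted in the surrounding text.

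First I would establish that deterministic lemma by a double inclusion that simply unwinds the definitions of $\tilde{\mathcal{C}}$ and $\underaccent{\tilde}{\mathcal{C}}$. If $\theta \in \tilde{\mathcal{C}} = \bigcap_{x \in \Pi(\theta^*)} \mathcal{C}(x)$, then $x_a^T \theta > \rho$ for every good arm $a \in \Pi(\theta^*)$, so $a \in \Pi(\theta)$ and hence $\Pi(\theta^*) \subseteq \Pi(\theta)$. Conversely, if $\theta \in \underaccent{\tilde}{\mathcal{C}} = \bigcap_{x \in \Pi(\theta^*)^{\mathsf{c}}} \mathcal{C}(x)^{\mathsf{c}}$, then $x_a^T \theta \le \rho$ for every $a \notin \Pi(\theta^*)$, which keeps these arms out of $\Pi(\theta)$ and gives $\Pi(\theta) \subseteq \Pi(\theta^*)$. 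Combining the two inclusions yields $\Pi(\theta) = \Pi(\theta^*)$ for all $\theta \in \tilde{\mathcal{C}} \cap \underaccent{\tilde}{\mathcal{C}}$. This is the only step in which the specific half-space structure of $\mathcal{C}(x)$ is used.

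With the lemma in hand the conclusion is immediate. From $\mathcal{S}^*(\mathbf{x}_n) \subseteq \tilde{\mathcal{C}} \cap \underaccent{\tilde}{\mathcal{C}}$ I get the event inclusion $\{\hat{\theta}_n \in \mathcal{S}^*(\mathbf{x}_n)\} \subseteq \{\hat{\theta}_n \in \tilde{\mathcal{C}} \cap \underaccent{\tilde}{\mathcal{C}}\} \subseteq \{\Pi(\hat{\theta}_n) = \Pi(\theta^*)\}$, where the last step applies the lemma to each realization of $\hat{\theta}_n$. Monotonicity of probability, together with the standing assumption $\mathbb{P}(\hat{\theta}_n \in \mathcal{S}^*(\mathbf{x}_n)) \ge 1-\delta$, then yields $\mathbb{P}[\Pi(\hat{\theta}_n) = \Pi(\theta^*)] \ge 1 - \delta$.

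I do not expect a genuine obstacle, since the entire content is bookkeeping with the definitions; the one point worth flagging is the boundary convention. Because $\mathcal{C}(x)$ is the \emph{open} half-space $\{x^T \theta > \rho\}$ and its complement is the \emph{closed} half-space $\{x^T \theta \le \rho\}$, arms exactly on the threshold $x_a^T \theta = \rho$ are correctly excluded from $\Pi(\hat{\theta}_n)$ rather than spuriously included, so the open/closed distinction must be tracked consistently in both inclusions; this is the only place a careless argument could slip.
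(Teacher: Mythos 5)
Your proof is correct and follows essentially the same route as the paper's: both reduce the claim to the event inclusion $\{\hat{\theta}_n \in \mathcal{S}^*(\mathbf{x}_n)\} \subseteq \{\Pi(\hat{\theta}_n) = \Pi(\theta^*)\}$ via the deterministic fact that every $\theta \in \tilde{\mathcal{C}} \cap \underaccent{\tilde}{\mathcal{C}}$ satisfies $\Pi(\theta) = \Pi(\theta^*)$, and then invoke the $1-\delta$ coverage guarantee. The only difference is that you spell out the double-inclusion argument (and the open/closed boundary convention) that the paper treats as immediate from its preceding remarks.
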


\begin{proof}
    The proof follows immediately because of the fact that $\hat{\theta}_n \in \mathcal{S}^*\left(\mathbf{x}_n\right)$ with probability at least $1-\delta$, then $\hat{\theta}_n \in \tilde{\mathcal{C}} \cap \underaccent{\tilde}{\mathcal{C}}$ and this implies that $\Pi(\hat{\theta}_n) = \Pi(\theta^*)$.
\end{proof}
\textbf{Sampling rule}

From the previous arguments,  it is evident that the goal of the optimal strategy should be to shrink the confidence set $\mathcal{S}^*(\mathbf{x}_n)$ such that $\mathcal{S}^*(\mathbf{x}_n) \subseteq \tilde{\mathcal{C}} \cap \underaccent{\tilde}{\mathcal{C}}$ as quick as possible. Formally, $\mathcal{S}^*(\mathbf{x}_n) \subseteq \tilde{\mathcal{C}} \cap \underaccent{\tilde}{\mathcal{C}}$ is equivalent to

$$
\begin{aligned}
\forall \theta \in \mathcal{S}^*(\mathbf{x}_n), \forall a \in \Pi(\theta^{*}), x_a^{\top}\theta > \rho \text{ } \land \\
\forall a \in \Pi(\theta^{*})^{\mathsf{c}}, x_a^{\top}\theta < \rho
\end{aligned}
$$

$$
\begin{aligned}
\forall \theta \in \mathcal{S}^*(\mathbf{x}_n), \forall a \in \Pi(\theta^{*}), x_a^{\top}(\theta^{*} - \theta) < \Delta(x_a) \text{ } \land \\
\forall a \in \Pi(\theta^{*})^{\mathsf{c}}, x_a^{\top}(\theta^{*} - \theta) > - \Delta(x_a)
\end{aligned}
$$

$$
\forall \theta \in \mathcal{S}^*(\mathbf{x}_n), \forall a \in \mathcal{A}, |x_a^{\top}(\theta^{*} - \theta)| < \Delta(x_a)
$$

where $\Delta(x_a) = |x_a^{\top}\theta^* - \rho|$. Then we can define our confidence set

$$
\begin{aligned}
\mathcal{S}^*(\mathbf{x}_n) = \{ \theta \in \mathbb{R}^d, x \in \mathcal{X},
\frac{|x^{\top}(\theta^{*} - \theta)|}{||x||_{A_{\mathbf{x}_n}^{-1}}} < c \sqrt{\beta (t, \delta)} \}
\end{aligned}
$$

Where $A_{\mathbf{x}_n} = \sum_{s=1}^n x_s x_s^{\top}$ and for some $\beta (t, \delta)$. Thus the stopping condition is equivalent to

$$ ||x_a||_{A_{\mathbf{x}_n}^{-1}} \sqrt{\beta (t, \delta)} \leq \Delta(x_a)$$

For all $a \in \mathcal{A}$. From this condition, the oracle allocation strategy simply follows

$$
\begin{aligned}
\mathbf{x}_n^{*} &= \arg \min _{\mathbf{x}_n} \max _{a \in \mathcal{A}} \frac{ ||x_a||_{A_{\mathbf{x}_n}^{-1}} \sqrt{\beta (t, \delta)}}{\Delta(x_a)} \\ 
&= \arg \min _{\mathbf{x}_n} \max _{a \in \mathcal{A}} \frac{||x_a||_{A_{\mathbf{x}_n}^{-1}}}{\Delta(x_a)} 
\end{aligned}
$$

\textbf{Sample complexity}

Now we can define the sample complexity as the minimum number of samples we need to meet the stopping time condition.

$$N^* = \min \left\{n \in \mathbb{N},  \min _{\mathbf{x}_n} \max _{a \in \mathcal{A}} \frac{ ||x_a||_{A_{\mathbf{x}_n}^{-1}} \sqrt{\beta (t, \delta)}}{\Delta(x_a)} < 1 \right\}$$

Note that we can formulate this problem as an optimal design (soft allocation) replacing $A_{\mathbf{x}_n} \approx n A_{\lambda} =  n \sum_{a \in \cA} \lambda_a x_a x_a^{\top}$ with $\lambda \in \Lambda = \{ \lambda \in [0,1]^K : \sum_{k=1}^K \lambda_k = 1 \}$. Then, if we denote the problem complexity

$$T_{\theta}^* = \min _{ \lambda \in \Lambda} \max _{a \in \mathcal{A}} \frac{ ||x_a||_{A_{\lambda}^{-1}}^2}{\Delta(x_a)^2}$$

Then, we can choose $\beta (t, \delta) = \cO (\log (\frac{1}{\delta}))$ such that $\hat{\theta} \in \mathcal{S}^*(\mathbf{x}_n)$ with probability at least $1-\delta$ \cite{abbasiyadkori2011linearbandit}

$$N^* = T^* \cO (\log (\frac{1}{\delta}))$$

\subsection{Lower bound}
The geometric intuition provides valuable insights into the sample complexity lower bound and problem complexity. As we will see, we can leverage standard BAI concepts to establish a non-asymptotic lower bound. Similar to the linear case in BAI, the lower bound for the sample complexity is determined by the optimization problem involving the ratio of the norm induced by the inverse of the design $A_{\lambda}$ and the gaps $\Delta(x)$. In our setting the gaps are defined as the distance between the mean rewards and the threshold. 

It's worth noting that \citet{degenne2020pureexp} established a general asymptotic lower bound for pure exploration problems in the context of stochastic linear bandits, and our lower bound aligns with their findings. We extend this result to a more general setting for thresholding bandits, encompassing the relaxed version of the problem outlined in Definition \ref{def:def_de_corr}.

\begin{theorem}\label{thm:thmsclb}
For any linear bandit environment $\nu=(\cA, \cX, \theta)$, the sample complexity $\tau$ of any $\delta$-$\epsilon$-correct tuple strategy satisfies:
$$
\mathbb{E}[\tau] \geq \sigma^2 \log (1 / 2 \delta) T_\theta^{\star},
$$

where $T_\theta^{\star} = \min _{\lambda \in \Lambda} \max _{a \in \mathcal{A}}\frac{2\|x_a\|_{A_\lambda^{-1}}^2}{\Delta(x_a)^2}$, $\Lambda = \{ \lambda \in [0,1]^K : \sum_{k=1}^K \lambda_k = 1 \}$, $A_\lambda=\sum_{a \in \mathcal{A}} \lambda_a x_a x_a^{\top}$ captures the sample repartition over the arms $x$, as given by the design $\lambda$, $\Delta(x_a) = |x_a^{\top} \theta - \rho| + \epsilon$ is the gap.

\end{theorem}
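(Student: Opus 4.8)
The plan is to derive the bound from the standard change-of-measure (transportation) technique of \citet{kaufmann2014bai}, adapted to the linear Gaussian model, and then to reduce the resulting optimization to the stated min--max form. First I would invoke the transportation inequality: for any $\delta$-$\epsilon$-correct strategy and any alternative parameter $\theta' \in \bbR^d$ whose $(\epsilon,\delta)$-correct answers are incompatible with those of $\theta$,
$$\sum_{a\in\cA}\bbE_\theta[N_a(\tau)]\,\KL\!\big(\cN(x_a^\top\theta,\sigma^2),\cN(x_a^\top\theta',\sigma^2)\big)\ \geq\ \mathrm{kl}(\delta,1-\delta),$$
where $N_a(\tau)$ is the number of pulls of arm $a$ before stopping and $\mathrm{kl}$ is the binary relative entropy. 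Using the Gaussian identity $\KL(\cN(\mu,\sigma^2),\cN(\mu',\sigma^2)) = (\mu-\mu')^2/(2\sigma^2)$ together with the elementary bound $\mathrm{kl}(\delta,1-\delta)\geq \log(1/2\delta)$, this becomes $\sum_a \bbE_\theta[N_a(\tau)]\,(x_a^\top(\theta-\theta'))^2 \geq 2\sigma^2\log(1/2\delta)$.

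The key step, and the main obstacle, is characterizing the confusing set $\mathrm{Alt}(\theta)$ for the relaxed criterion. I would argue that if an arm $a$ is forced under $\theta$ --- either $x_a^\top\theta > \rho+\epsilon$ (so $a$ must lie in $\phi$) or $x_a^\top\theta \leq \rho-\epsilon$ (so $a$ must lie in $\phi^{\mathsf c}$) --- then any $\theta'$ forcing the opposite membership makes the two correct-answer events disjoint, so the transportation bound applies. Computing the minimal perturbation shows that flipping arm $a$ requires moving $x_a^\top\theta$ into the opposite forced region, i.e. $|x_a^\top(\theta-\theta')| \geq |x_a^\top\theta-\rho|+\epsilon = \Delta(x_a)$, with equality for the closest alternative ($x_a^\top\theta' = \rho\mp\epsilon$). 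This is precisely where the $\epsilon$-shifted gap enters, and it explains why enlarging $\epsilon$ shrinks the complexity. Writing $\bar\lambda_a = \bbE_\theta[N_a(\tau)]/\bbE_\theta[\tau]$ and $A_{\bar\lambda} = \sum_a\bar\lambda_a x_a x_a^\top$, the left-hand sum equals $\bbE_\theta[\tau]\,\norm{\theta-\theta'}_{A_{\bar\lambda}}^2$, so for every confusing $\theta'$,
$$\bbE_\theta[\tau]\ \geq\ \frac{2\sigma^2\log(1/2\delta)}{\norm{\theta-\theta'}_{A_{\bar\lambda}}^2}.$$

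Finally I would optimize over alternatives and over the unknown design. Minimizing $\norm{\theta-\theta'}_{A_{\bar\lambda}}^2$ over all $\theta'$ flipping a fixed arm $a$ is a quadratic program with the single linear constraint $x_a^\top(\theta-\theta') = \Delta(x_a)$, whose optimum is $\Delta(x_a)^2/\norm{x_a}_{A_{\bar\lambda}^{-1}}^2$; taking the infimum over confusable arms gives $\inf_{\theta'\in\mathrm{Alt}(\theta)}\norm{\theta-\theta'}_{A_{\bar\lambda}}^2 = \min_a \Delta(x_a)^2/\norm{x_a}_{A_{\bar\lambda}^{-1}}^2$. Substituting, and using that $\bar\lambda\in\Lambda$ is merely one feasible design so that $\max_a \norm{x_a}_{A_{\bar\lambda}^{-1}}^2/\Delta(x_a)^2 \geq \min_{\lambda\in\Lambda}\max_a \norm{x_a}_{A_\lambda^{-1}}^2/\Delta(x_a)^2 = \tfrac12 T_\theta^{\star}$, yields $\bbE_\theta[\tau] \geq 2\sigma^2\log(1/2\delta)\cdot\tfrac12 T_\theta^{\star} = \sigma^2\log(1/2\delta)\,T_\theta^{\star}$, as claimed (the factor $2$ inside $T_\theta^{\star}$ absorbing the $2\sigma^2$). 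The remaining delicate point is the alternative-set characterization in the relaxed case: one must verify that band arms with $|x_a^\top\theta-\rho|<\epsilon$ are handled so that the arm attaining the final maximum is a genuinely confusable (forced) one, ensuring the bound is valid and not merely formal.
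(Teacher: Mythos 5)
Your proposal is correct and follows essentially the same route as the paper: the Kaufmann-style transportation inequality, the Gaussian KL computation, the characterization of confusing alternatives as those flipping an arm across the $\epsilon$-band (yielding the constraint $|x_a^{\top}(\theta-\theta')| \geq |x_a^{\top}\theta-\rho|+\epsilon$), the quadratic program with a single linear constraint solved as $\Delta(x_a)^2/\norm{x_a}_{A_{\bar\lambda}^{-1}}^2$ (the paper cites Theorem 3.1 of \citet{soare2015bailinear} for this step, which you carry out explicitly), and finally bounding the realized proportions by the minimum over all designs. Your closing caveat about band arms is well taken and matches the paper's own proof, which restricts the maximum to the confusable set $\cA_{\rho,\epsilon}(\theta)$ rather than all of $\cA$ as in the theorem statement.
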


\begin{proof}

We will follow the usual approach for lower bound for the sample complexity introduce in \citep{kaufmann2014bai}. Denote by $d(x, y)=x \log (x / y)+(1-x) \log ((1-x) /(1-y))$ the binary relative entropy.
Let $\nu'=(\cA, \cX, \theta'), \theta' \in \mathbb{R}^d$ a different bandit enviroment such that $\exists$ $a \in \mathcal{A}$ for which $x_a^{\top} \theta > \rho + \epsilon$ and $x_a^{\top} \theta' < \rho - \epsilon$ or $x_a^{\top} \theta < \rho - \epsilon$ and $x_a^{\top} \theta' > \rho + \epsilon$, i.e. $\Pi_{\rho + \epsilon} (\theta) \nsubseteq \Pi_{\rho - \epsilon} (\theta')$ or $\Pi_{\rho + \epsilon} (\theta') \nsubseteq \Pi_{\rho - \epsilon} (\theta)$.


Then consider the event 
$A = \{ \Pi_{\rho+\epsilon}(\theta) \subseteq \Pi_{\rho}(\widehat{\theta}) \land \Pi_{\rho-\epsilon}(\theta)^{\mathsf{c}} \subseteq \Pi_{\rho}(\widehat{\theta})^{\mathsf{c}}\}$, because the tuples are $\delta$-$\epsilon$-correct we have:

$$\mathbb{P}_{\theta}[A] \geq 1 - \delta$$

$$\mathbb{P}_{\theta'}[A] < \delta$$

$$d\left(\mathbb{P}_\theta[A], \mathbb{P}_{\theta'}[A]\right) \geq \log (1 / 2 \delta)$$

Using Lemma 1 from \citep{kaufmann2014bai} we have

$$\sum_{i=1}^K \mathbb{E}_\nu\left[N_{a}\right] KL\left(\nu_a, \nu'_a\right) \geq d\left(\mathbb{P}_\theta[A], \mathbb{P}_{\theta'}[A]\right)$$

$$\mathbb{E}_\nu[\tau]
\sum_{i=1}^K \frac{\mathbb{E}_\nu\left[N_{a}\right]}{\mathbb{E}_\nu[\tau] } \frac{(x_a^{\top} (\theta - \theta'))^2}{2\sigma^2} \geq \log (1 / 2 \delta)$$

$$ \frac{\mathbb{E}_\nu[\tau]}{2} (\theta - \theta')^{\top} \left(
\sum_{i=1}^K \lambda_a (x_a x_a^{\top})\right) (\theta - \theta') \geq 2\sigma^2 \log (1 / 2 \delta)$$

$$ \mathbb{E}_\nu[\tau] \geq 2\sigma^2\log (1 / 2 \delta) \frac{1}{(\theta - \theta')^{\top} A_{\lambda} (\theta - \theta')}$$

Let's consider $\varepsilon = \theta - \theta'$ that allows to maximize the lower bound. Then, the condition that the two environments have different set of good arms is equivalent to:

$$
\begin{aligned}
x_a^{\top}\theta^{\prime} <\rho - \epsilon \Leftrightarrow \\
x_a^{\top} \theta-x_a^{\top} \theta^{\prime}>x_a^{\top} \theta -\rho + \epsilon \Leftrightarrow \\
x_a^{\top} \varepsilon>x_a^{\top} \theta -\rho + \epsilon \Leftrightarrow \\ 
x_a^{\top} \varepsilon > \Delta_\theta(x_a)
\end{aligned}
$$

For some $a \in \Pi_{\rho + \epsilon}(\theta)$ or

$$
\begin{aligned}
x_a^{\top}\theta^{\prime} >\rho +\epsilon \Leftrightarrow  \\
x_a^{\top}\theta^{\prime} - x_a^{\top} \theta> \rho - x_a^{\top} \theta + \epsilon \Leftrightarrow \\ 
-x_a^{\top} \varepsilon>\rho - x_a^{\top} \theta - \epsilon \Leftrightarrow \\ 
-x_a^{\top} \varepsilon > \Delta_\theta(x_a)
\end{aligned}
$$

For some $a \in \Pi_{\rho - \epsilon}(\theta)^{\mathsf{c}}$. The smallest $\varepsilon$ satisfying this condition is given by the solution of the following (relaxed) minimization problem. Denote $\cA_{\rho, \epsilon}(\theta) = \Pi_{\rho + \epsilon}(\theta) \cup \Pi_{\rho - \epsilon}^{\mathsf{c}}(\theta)$, the arms that are outside $\epsilon$ radius interval of $\rho$.

$$
\begin{array}{ll} & \min \varepsilon^{\top} A_\lambda \varepsilon \\ 
\text { s.t. } & \exists a \in  \cA_{\rho, \epsilon}(\theta), |x_a^{\top} \varepsilon |\geq \Delta_\theta(x_a)+\alpha .
\end{array}
$$

Now we can use the same approach as in Theorem 3.1. \citep{soare2015bailinear}.

$$\mathbb{E}[\tau] \geq 2 \sigma^2 \log (1 / 2 \delta) \max _{a \in \cA_{\rho, \epsilon}(\theta)} \frac{\|x_a\|_{A_\lambda^{-1}}^2}{\Delta(x_a)^2}$$

Then 

$$\mathbb{E}[\tau] \geq 2 \sigma^2 \log (1 / 2 \delta) \min _{\lambda \in \Lambda} \max _{a \in \cA_{\rho, \epsilon}(\theta)} \frac{\|x_a\|_{A_\lambda^{-1}}^2}{\Delta(x_a)^2}$$

\end{proof}

\begin{corollary}
For any linear bandit environment $\nu=(\cA, \cX, \theta)$, the sample complexity $\tau$ of any $\delta$-correct tuple strategy satisfies:
$$
\mathbb{E}[\tau] \geq \sigma^2 \log (1 / 2 \delta) T_\theta^{\star},
$$

where $T_\theta^{\star} = \min _{\lambda \in \Lambda} \max _{a \in \mathcal{A}}\frac{2\|x_a\|_{A_\lambda^{-1}}^2}{\Delta(x_a)^2}$ and $\Delta(x_a) = |x_a^{\top} \theta - \rho|$ is the gap.

\end{corollary}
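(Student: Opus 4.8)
The plan is to deduce the corollary directly from Theorem~\ref{thm:thmsclb} by specializing the relaxed bound to the unrelaxed case $\epsilon = 0$. The first step is to observe that the two notions of correctness are nested: any $\delta$-correct tuple in the sense of Definition~\ref{def:def_de_corr} is automatically $(\epsilon,\delta)$-correct in the sense of Definition~\ref{def:def_d_corr} for \emph{every} $\epsilon > 0$. Indeed, when the decision rule recovers $\phi = \Pi_\rho(\theta)$ exactly (with probability at least $1-\delta$), the inclusions $\Pi_{\rho+\epsilon}(\theta) \subseteq \Pi_\rho(\theta)$ and $\Pi_{\rho-\epsilon}(\theta)^{\mathsf c} \subseteq \Pi_\rho(\theta)^{\mathsf c}$ hold deterministically for any $\epsilon > 0$, so exact recovery is the stronger requirement and implies the relaxed one.

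Given this nesting, I would invoke Theorem~\ref{thm:thmsclb} for each fixed $\epsilon > 0$ to obtain
$$
\mathbb{E}[\tau] \;\ge\; 2\sigma^2 \log(1/2\delta)\, \min_{\lambda \in \Lambda}\, \max_{a \in \cA_{\rho,\epsilon}(\theta)}\, \frac{\|x_a\|_{A_\lambda^{-1}}^2}{\big(|x_a^{\top}\theta - \rho| + \epsilon\big)^2}.
$$
Since the left-hand side does not depend on $\epsilon$, the inequality is valid for the supremum of the right-hand side over all $\epsilon > 0$, and I would identify that supremum with $\sigma^2\log(1/2\delta)\,T_\theta^\star$ by letting $\epsilon \to 0^+$. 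Two monotone effects act in the same direction as $\epsilon \downarrow 0$: each denominator $(|x_a^\top\theta - \rho| + \epsilon)^2$ decreases to $(x_a^\top\theta - \rho)^2$, and the index set $\cA_{\rho,\epsilon}(\theta) = \Pi_{\rho+\epsilon}(\theta) \cup \Pi_{\rho-\epsilon}^{\mathsf c}(\theta)$ grows to fill up $\{a : x_a^\top\theta \ne \rho\}$. Both effects increase the inner maximum, so the supremum over $\epsilon$ is attained in the limit and equals $\min_{\lambda}\max_{a \in \cA}\, 2\|x_a\|_{A_\lambda^{-1}}^2/(x_a^\top\theta-\rho)^2$, which is exactly the claimed $T_\theta^\star$ with $\Delta(x_a) = |x_a^\top\theta - \rho|$.

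As an equally clean alternative, one can simply re-read the proof of Theorem~\ref{thm:thmsclb} with $\epsilon$ set to zero throughout. The change-of-measure step, the reduction to $\min \varepsilon^\top A_\lambda \varepsilon$ subject to $|x_a^\top\varepsilon| \ge \Delta_\theta(x_a)$, and the final maximization are all insensitive to whether the gap carries the additive $\epsilon$ term, so $\epsilon = 0$ is an admissible value for the whole derivation once Definition~\ref{def:def_de_corr} replaces Definition~\ref{def:def_d_corr} in defining the separating event $A$.

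The only point requiring genuine care is the behaviour of the index set at the boundary. If every arm satisfies $x_a^\top\theta \ne \rho$, then $\cA_{\rho,0}(\theta) = \cA$ and the maximum ranges over all arms, matching the corollary exactly. If some arm lies precisely on the threshold, its gap vanishes and $T_\theta^\star = +\infty$, reflecting that a boundary arm can never be classified correctly with positive confidence; this degenerate case should be excluded by the standing assumption $x_a^\top\theta \ne \rho$ for all $a$. This boundary bookkeeping, together with checking that the outer minimum over $\lambda$ commutes appropriately with the limit in $\epsilon$, is the main obstacle; the remainder is a routine specialization.
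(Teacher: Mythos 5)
Your proposal is correct, and your second route --- rerunning the proof of Theorem~\ref{thm:thmsclb} with $\epsilon$ set to zero, after observing that $\delta$-correctness in Definition~\ref{def:def_de_corr} is exactly $(0,\delta)$-correctness in Definition~\ref{def:def_d_corr} --- is the paper's own (implicit) argument: the corollary carries no separate proof precisely because every step of the theorem (the separating event $A$, the change of measure, and the optimization over $\varepsilon = \theta - \theta'$) is valid verbatim at $\epsilon = 0$. Your first route, which keeps $\epsilon > 0$ and lets $\epsilon \to 0^+$, is genuinely different, and the point you flagged as ``the main obstacle'' is indeed where it could fail: writing $T_\theta^{\star}(\epsilon)$ for the complexity with gap $|x_a^{\top}\theta - \rho| + \epsilon$, applying the theorem at each fixed $\epsilon$ only yields $\mathbb{E}[\tau] \geq \sigma^2 \log(1/2\delta)\, \sup_{\epsilon > 0} T_\theta^{\star}(\epsilon)$, and the inequality that monotonicity gives for free is $\sup_{\epsilon > 0} T_\theta^{\star}(\epsilon) \leq T_\theta^{\star}(0)$, which is the wrong direction for concluding the corollary. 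The needed identity $\sup_{\epsilon > 0} T_\theta^{\star}(\epsilon) = T_\theta^{\star}(0)$ does hold, but it requires an argument: the map $f_\epsilon(\lambda) = \max_a 2\|x_a\|_{A_\lambda^{-1}}^2 / \left(|x_a^{\top}\theta - \rho| + \epsilon\right)^2$ is lower semicontinuous on the compact simplex $\Lambda$ (for instance because $\|x\|_{A_\lambda^{-1}}^2 = \sup_{u}\left(2 u^{\top} x - u^{\top} A_\lambda u\right)$ is a supremum of functions affine in $\lambda$), $f_\epsilon$ increases pointwise to $f_0$ as $\epsilon \downarrow 0$, and a Dini-type compactness argument (take minimizers $\lambda_\epsilon$, extract a convergent subsequence, use lower semicontinuity and monotonicity) then gives $\min_\lambda f_\epsilon \to \min_\lambda f_0$. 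With that lemma supplied both of your routes are sound, and the limiting route has the mild virtue of using the theorem only as a black box; without it, only the $\epsilon = 0$ route proves the stated bound. Your boundary remark (an arm with $x_a^{\top}\theta = \rho$ forces $T_\theta^{\star} = \infty$) is consistent with the paper's implicit assumption that no arm sits exactly at the threshold.
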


\section{Algorithm}
\label{algo}

The goal of this section is to introduce a method that achieves asymptotic optimality. We will adopt the conventional Track and Stop strategy for BAI, initially introduced by \citet{kaufmann2014bai}. The primary aim of this strategy is to estimate and align with the optimal proportions that lead to asymptotically optimal sample complexity. It is important to note that in the relaxed setting, the issue of multiple correct answers can arise \citep{degenne2019multcorr}. However, we demonstrate that for this setting, the set of optimal proportions $C(\theta)^{\star}  \subseteq \Lambda$ is also convex. As a result, we ensure that the Track and Stop algorithm will converge within the convex hull of optimal solutions, maintaining optimality.

The primary distinction between the stochastic linear bandits case and the traditional Multi-Armed Bandit (MAB) lies in how to compute the optimal proportions. These proportions are dictated by the problem's complexity. Interestingly, in the case of linear rewards, this strategy also proves effective for TBP. 

Furthermore, we can take advantage of the advancements in lazy updates as introduced by \citet{jedra2020optbailinear}. These updates help alleviate the computational burden of recalculating the optimal proportions at each iteration.

\subsection{Least-squares estimator}

Our algorithm employs the ordinary least-squares (OLS) estimator. In order to apply the concentration inequalities from \citet{abbasiyadkori2011linearbandit} we need to establish a modified version. Instead of relying on the regularized least-squares estimator, we enforce exploration, as demonstrated by \citet{jedra2020optbailinear}. It appears that initializing the OLS estimator with $d$ linearly independent arms is sufficient to utilize the confidence ellipsoid from the theorem in \citet{abbasiyadkori2011linearbandit}, albeit with a modification in the upper bound. 

Let $V_0 = \sum_{s=1}^{d} x_s x_s^{\top}$, $V_t = \sum_{s=1}^{t} x_s x_s^{\top}$, $S_t = \sum_{s=1}^{t} \eta_s x_s$, then $\hat{\theta}_t = \theta +  V_t^{-1} S_t$ is the OLS estimator with the observations until time $t$. We have the following Lemma 

\begin{lemma}\label{lemma:lemma_conf_elps}
Let $\left\{F_t\right\}_{t=0}^{\infty}$ be a filtration. Let $\left\{r_t\right\}_{t=0}^{\infty}$ be a real-valued stochastic process such that $r_t$ is $F_t$-measurable, $F_{t-1}$-conditionally
$R$-sub-Gaussian for some $R>0$. Let $\left\{x_t\right\}_{t=0}^{\infty}$ be an $\bbR^d$-valued stochastic process such that $x_t$ is $F_t$-measurable. Assume that exists $L >0$ such that $\left\| x_t\right\| < L$ for all $t \geq 1$, $V_0 = \sum_{s=1}^{d} x_s x_s^{\top}$ is a positive definite matrix, i.e. $\lambda_{\min}(V_0) > 0$ and  $\|S_0\|^2< C$ with $S_0 = \sum_{s=1}^{d} \eta_s x_s$ for some $C > 0$. Then, for any $\delta > 0$, with probability at least $1-\delta$, for all $t \geq d$,

$$c(t, \delta) = 2 \log \left( \frac{1}{\delta} \right) + d\log \left(\frac{tL^2}{d\lambda_{\min}(V_0)} \right) + \frac{C}{\lambda_{\min}(V_0)}$$

$$ 
\begin{aligned}
\left\| \hat{\theta}_t - \theta \right\|_{V_t} 
&\leq R \sqrt{\beta(t, \delta)} 
\end{aligned}
$$

And then for all $x \in \cX$

$$ 
\begin{aligned}
|x^{\top}\theta - x^{\top}\hat{\theta}_t| &\leq \left\| x \right\|_{V_t^{-1}} R \sqrt{\beta(t, \delta)}
\end{aligned}
$$

\end{lemma}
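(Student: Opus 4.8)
The plan is to reduce the claim to the self-normalized martingale bound of Abbasi-Yadkori et al.\ (the tool behind Lemma~\ref{lemma:lemma_conf_elps}), with the forced-exploration Gram matrix $V_0$ playing the role that the regularizer $\lambda I$ plays in their theorem. First I would record the exact form of the error: since $r_s = x_s^\top\theta + \eta_s$, the normal equations give $\hat\theta_t = \theta + V_t^{-1}S_t$, hence $\hat\theta_t - \theta = V_t^{-1}S_t$ and therefore
\[
\|\hat\theta_t - \theta\|_{V_t}^2 = S_t^\top V_t^{-1} S_t = \|S_t\|_{V_t^{-1}}^2 .
\]
So it suffices to control $\|S_t\|_{V_t^{-1}}$, after which the concluding inequality for $|x^\top\theta - x^\top\hat\theta_t|$ follows by Cauchy--Schwarz in the $V_t$-geometry, $|x^\top(\theta-\hat\theta_t)| \le \|x\|_{V_t^{-1}}\,\|\theta-\hat\theta_t\|_{V_t}$.

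The key step is to split the noise at the end of the forced-exploration phase: write $S_t = S_0 + \tilde S_t$ with $S_0 = \sum_{s=1}^d \eta_s x_s$ and $\tilde S_t = \sum_{s=d+1}^t \eta_s x_s$, and correspondingly $V_t = V_0 + \tilde V_t$ with $\tilde V_t = \sum_{s=d+1}^t x_s x_s^\top$. The term $S_0$ I would bound deterministically: since $V_t \succeq V_0$ gives $V_t^{-1} \preceq V_0^{-1}$, we obtain $\|S_0\|_{V_t^{-1}}^2 \le \|S_0\|_{V_0^{-1}}^2 \le \|S_0\|^2/\lambda_{\min}(V_0) \le C/\lambda_{\min}(V_0)$, which is exactly the last summand of $\beta(t,\delta)$. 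For the stochastic term $\tilde S_t$ I would invoke the self-normalized bound, starting the martingale at time $d$ and treating $V_0$ as the regularizer, to get, with probability at least $1-\delta$ and simultaneously for all $t\ge d$,
\[
\|\tilde S_t\|_{V_t^{-1}}^2 \le 2R^2 \log\!\left( \frac{\det(V_t)^{1/2}\det(V_0)^{-1/2}}{\delta} \right).
\]

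It then remains to turn the determinant ratio into the explicit logarithmic term of $\beta$. Using AM--GM on the eigenvalues, $\det(V_t) \le (\tr(V_t)/d)^d$ together with $\tr(V_t) = \sum_{s=1}^t \|x_s\|^2 < tL^2$, and $\det(V_0) \ge \lambda_{\min}(V_0)^d$, yields $\det(V_t)/\det(V_0) \le \big(tL^2/(d\lambda_{\min}(V_0))\big)^d$, so the logarithm above is at most $\tfrac{d}{2}\log\!\big(tL^2/(d\lambda_{\min}(V_0))\big) + \log(1/\delta)$, giving $\|\tilde S_t\|_{V_t^{-1}}^2 \le R^2\big(d\log(tL^2/(d\lambda_{\min}(V_0))) + 2\log(1/\delta)\big)$. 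Combining this with the deterministic $S_0$ estimate through the triangle inequality in the $V_t^{-1}$-norm collects precisely the three summands $2\log(1/\delta)$, $d\log(tL^2/(d\lambda_{\min}(V_0)))$ and $C/\lambda_{\min}(V_0)$ that constitute $\beta(t,\delta)$.

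The main obstacle I anticipate is the legitimacy of applying the self-normalized concentration with $V_0$ in place of a fixed regularizer: in the original statement the regularizer is a deterministic matrix, whereas here $V_0$ is assembled from the first $d$ arms. The clean way around this is to exploit that forced exploration makes $x_1,\dots,x_d$ a fixed spanning set, so $V_0$ is deterministic and positive definite ($\lambda_{\min}(V_0)>0$ by hypothesis), and to run the method-of-mixtures supermartingale argument on the shifted filtration $\{F_{d+s}\}_{s\ge0}$; the assumptions that $r_t$ is $F_{t-1}$-conditionally $R$-sub-Gaussian and that $x_t$ is predictable carry over verbatim. A secondary, purely bookkeeping point is reconciling constants when merging the deterministic $S_0$ contribution with the $R^2$-scaled stochastic part into the single radius $R\sqrt{\beta(t,\delta)}$; this costs only harmless absolute constants and does not affect the order of the confidence width.
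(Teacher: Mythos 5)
Your proposal is correct in structure but takes a genuinely different route from the paper, and the difference is not cosmetic. The paper does not invoke the standard Abbasi--Yadkori theorem as a black box. Instead it re-runs the method of mixtures from scratch (Lemmas \ref{lemma:lemma_self_norm_mart_modif_lem8} and \ref{lemma:lemma_self_norm_mart_modif_lem9}) on the \emph{full} noise process $S_t$ and Gram matrix $V_t$, choosing the mixing distribution $\Lambda \sim N(V_0^{-1}S_0,\, V_0^{-1})$ --- a Gaussian centered at $V_0^{-1}S_0$ rather than at the origin. This shifted center is exactly what makes the full self-normalized statistic $\|S_t\|_{V_t^{-1}}^2$ appear in the exponent together with a subtracted term $\tfrac{1}{2}\|S_0\|_{V_0^{-1}}^2 \le \tfrac{C}{2\lambda_{\min}(V_0)}$, so the supermartingale maximal inequality delivers all three summands of $\beta(t,\delta)$ additively inside a single square root, i.e.\ exactly the stated radius $R\sqrt{\beta(t,\delta)}$. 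Your decomposition $S_t = S_0 + \tilde S_t$, the deterministic bound $\|S_0\|_{V_t^{-1}} \le \|S_0\|_{V_0^{-1}}$, and the off-the-shelf self-normalized bound for $\tilde S_t$ with $V_0$ as regularizer are all legitimate (and your ellipsoid step via $\hat\theta_t - \theta = V_t^{-1}S_t$ plus Cauchy--Schwarz is, if anything, cleaner than the paper's substitution $x = V_t(\theta - \hat\theta_t)$); the determinant estimates coincide with the paper's. What your route buys is reuse of a known theorem; what it costs is the recombination step.

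That recombination is where a real gap relative to the stated lemma sits. The triangle inequality yields
\begin{equation*}
\|S_t\|_{V_t^{-1}} \;\le\; \sqrt{\tfrac{C}{\lambda_{\min}(V_0)}} \;+\; R\sqrt{2\log\left(\tfrac{1}{\delta}\right) + d\log\left(\tfrac{tL^2}{d\lambda_{\min}(V_0)}\right)},
\end{equation*}
and since $\sqrt{a}+\sqrt{b} \le \sqrt{2(a+b)}$ but in general $\sqrt{a}+\sqrt{b} \not\le \sqrt{a+b}$, you only obtain the lemma with the radius inflated by a factor up to $\sqrt{2}$ (equivalently, $\beta$ replaced by $2\beta$; note also that in your bound the $C$-term carries no factor of $R$, whereas the stated $R\sqrt{\beta(t,\delta)}$ puts $R^2$ in front of it). This weaker form suffices for Theorems \ref{thm:thm_ubsce} and \ref{thm:thm_ubsce_exp}, which are stated up to $\lesssim$, but the exact $\beta(t,\delta)$ is what enters the stopping rule \ref{eq:tau}--\ref{eq:beta} and the correctness proof of Lemma \ref{lemma:lemm_decorr}; with your constants the stopping threshold would have to be inflated accordingly for $\delta$-correctness to go through. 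The paper's shifted-mean mixture is precisely the device that avoids this constant loss, so you should either adopt it or restate the lemma (and the stopping threshold) with your inflated constant.
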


\subsection{Sampling Rule}
We will aim to match the optimal proportions from the optimization problem defined in the lower bound analysis. Denote $\cA_{\rho, \epsilon}(\theta) = \Pi_{\rho + \epsilon}(\theta) \cup \Pi_{\rho - \epsilon}^{\mathsf{c}}(\theta)$

$$
\psi(\theta, \lambda)= 
\begin{cases}
\min _{a \in \cA_{\rho, \epsilon}(\theta)} \frac{\Delta(x_a)^2}{2\|x_a\|_{A_\lambda^{-1}}^2} & \text { if } A_{\lambda} \succ 0, \\ 
0 & \text { otherwise. }
\end{cases}
$$

We prove that $\psi(\theta, \lambda)$ is equivalent to:

$$\psi(\theta, \lambda)=\min _{\left\{\theta': \exists a \in  \cA_{\rho, \epsilon}(\theta), |x_a^{\top} (\theta-\theta') |\geq \Delta_\theta(x_a) \right\}} \|\theta-\theta'\|_{A_\lambda}^2$$

(see proof of Theorem \ref{thm:thmsclb}). To guarantee we can solve this optimization problem and the solution will produce an strategy with optimal sample complexity we need two lemmas. These lemmas are proved by \citet{jedra2020optbailinear} for BAI, we will adapt them to TBP.

\begin{lemma} \label{lemma:lemma_contpsi}
$\psi$ is continuous in both $\theta$ and $\lambda$, and $\lambda \mapsto \psi(\theta, \lambda)$ attains its maximum in $\Lambda$ at a point $\lambda_\theta^{\star}$ such that $\sum_{a \in \mathcal{A}}\left(\lambda_\theta^{\star}\right)_a x_a x_a^{\top}$ is invertible.

\end{lemma}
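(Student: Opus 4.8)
The plan is to separate the claim into its two assertions---joint continuity of $\psi$ and attainment of $\max_{\lambda\in\Lambda}\psi(\theta,\lambda)$ at an invertible design---and to reduce both to the behaviour of the per-arm terms $g_a(\theta,\lambda)=\Delta_\theta(x_a)^2/\bigl(2\|x_a\|_{A_\lambda^{-1}}^2\bigr)$ on the relatively open set $\Lambda^\circ=\{\lambda\in\Lambda: A_\lambda\succ 0\}$. On $\Lambda^\circ$ the map $\lambda\mapsto A_\lambda^{-1}$ is continuous (inversion is smooth on the positive-definite cone and $A_\lambda$ is linear in $\lambda$), so each $\|x_a\|_{A_\lambda^{-1}}^2=x_a^\top A_\lambda^{-1}x_a$ is continuous and strictly positive; together with continuity of $\theta\mapsto\Delta_\theta(x_a)=|x_a^\top\theta-\rho|+\epsilon$ this makes every $g_a$ jointly continuous, and a minimum of finitely many continuous functions is continuous. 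The only delicate point for continuity is the $\theta$-dependence of the index set $\cA_{\rho,\epsilon}(\theta)$. For $\epsilon=0$ one has $\cA_{\rho,0}(\theta)=\Pi_\rho(\theta)\cup\Pi_\rho(\theta)^{\mathsf c}=\cA$, a constant set, and the claim is immediate. For $\epsilon>0$ I would argue continuity at every $\theta$ for which no arm satisfies $x_a^\top\theta\in\{\rho-\epsilon,\rho+\epsilon\}$: there the membership conditions are stable under small perturbations, so $\cA_{\rho,\epsilon}(\cdot)$ is locally constant and the finite-min argument applies verbatim. This includes the true parameter in general position, which is all the downstream analysis needs.

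For the attainment statement I would first exhibit a design with $\psi(\theta,\lambda)>0$: taking $\lambda_0$ uniform gives $A_{\lambda_0}=\frac1K\sum_{a}x_ax_a^\top\succ0$ because $\spn(\cX)=\reals^d$, and since every relevant gap is bounded below ($\Delta_\theta(x_a)\ge 2\epsilon$ for $a\in\cA_{\rho,\epsilon}(\theta)$ when $\epsilon>0$, and a strictly positive minimum in the $\epsilon=0$ case once $\theta$ is off the threshold), we get $\psi(\theta,\lambda_0)=:c>0$. Because $\psi\equiv0$ on the singular part $\Lambda\setminus\Lambda^\circ$, no maximizer can be singular, so it suffices to show the super-level set $\{\lambda:\psi(\theta,\lambda)\ge c\}$ is a compact subset of $\Lambda^\circ$ on which the continuous $\psi(\theta,\cdot)$ then attains its maximum. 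This amounts to proving that $\psi(\theta,\lambda)\to0$ as $A_\lambda$ approaches the boundary of the positive-definite cone: for any limiting singular $A$ with $0\ne u\in\ker A$ there is a relevant arm $a$ with $x_a^\top u\ne0$, whence $\|x_a\|_{A_\lambda^{-1}}^2\to\infty$ and $g_a\to0$, forcing $\psi\to0$. Equivalently---and I find this the cleanest packaging---maximizing $\psi$ is the same as minimizing the convex, lower-semicontinuous function $F(\lambda)=\max_{a\in\cA_{\rho,\epsilon}(\theta)}\|x_a\|_{A_\lambda^{-1}}^2/\Delta_\theta(x_a)^2$ (a maximum of jointly convex matrix-fractional functions, set to $+\infty$ off $\Lambda^\circ$) over the compact convex set $\Lambda$; a convex lower-semicontinuous function attains its infimum on a compact set, and since $F(\lambda_0)<\infty$ the minimizer $\lambda_\theta^\star$ has $F(\lambda_\theta^\star)<\infty$, i.e. $A_{\lambda_\theta^\star}\succ0$ is invertible.

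The main obstacle is precisely the blow-up step at the boundary of the positive-definite cone: the argument that $\psi\to0$ (equivalently $F\to\infty$) as $A_\lambda$ degenerates requires that the \emph{relevant} vectors $\{x_a:a\in\cA_{\rho,\epsilon}(\theta)\}$ span $\reals^d$, which is stronger than the standing hypothesis $\spn(\cX)=\reals^d$ since $\cA_{\rho,\epsilon}(\theta)$ can be a strict subset once the $\epsilon$-band removes arms. Adapting the BAI proof of \citet{jedra2020optbailinear} therefore hinges on verifying this spanning property (or, failing that, on working in the subspace spanned by the relevant arms while the remaining arms merely keep $A_\lambda$ nonsingular); I expect this to be the only nonroutine part, the index-set and gap continuity being handled by the general-position remark above.
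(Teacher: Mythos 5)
Your route differs in mechanics from the paper's, though both rest on the same two pillars. The paper never argues directly on the arm-wise formula: it rewrites $\psi(\theta,\lambda)=\min_{\theta'\in B(\theta)}\frac12\|\theta-\theta'\|_{A_\lambda}^2$ over the set $B(\theta)$ of bad alternative parameters, shows $B(\theta_t)=B(\theta)$ for $\theta_t$ close to $\theta$ (your ``locally constant index set'' step, transplanted to parameter space), and deduces continuity from continuity of the quadratic $f(\theta,\theta',\lambda)$; attainment follows from compactness of $\Lambda$, and invertibility of the maximizer from two facts: a positive-definite design has $\psi>0$, while any rank-deficient $A_\lambda$ has $\psi(\theta,\lambda)=0$ because one can place a bad parameter $\theta'$ with $\theta'-\theta\in\ker A_\lambda$. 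Your version (finite min of continuous functions on $\Lambda^\circ$, plus the matrix-fractional packaging for attainment) is a legitimate and arguably more transparent alternative, with one caveat: with your convention $F=+\infty$ off $\Lambda^\circ$, the function $F$ is \emph{not} lower semicontinuous unless the relevant arms span $\reals^d$ (approach a singular $\lambda$ along a face whose null directions are invisible to the relevant arms and $F$ stays bounded), whereas if you use the genuine matrix-fractional extension $x^\top A^{+}x$ on $\mathrm{range}(A)$, then $F$ is l.s.c.\ and convex but its minimizer can sit at a singular vertex. So the convex packaging relocates, rather than removes, the boundary difficulty.

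That difficulty --- the obstacle you flag in your last paragraph --- is a genuine gap, and it sits in the paper's own proof at exactly the step quoted above: placing $\theta'-\theta$ in $\ker A_\lambda$ requires some arm of $\cA_{\rho,\epsilon}(\theta)$ to have nonzero inner product with a null direction, i.e.\ it requires $\spn\{x_a: a\in\cA_{\rho,\epsilon}(\theta)\}=\reals^d$, which for $\epsilon>0$ does not follow from $\spn(\cX)=\reals^d$. The lemma is in fact false without this: take $d=2$, $\cX=\{e_1,e_2\}$, $\theta=e_1$, $\rho=0$, $\epsilon=1/2$. Then $\cA_{\rho,\epsilon}(\theta)=\{1\}$, $\Delta(x_1)=3/2$, and for $\lambda_1\in(0,1)$ one computes $\psi(\theta,\lambda)=\tfrac{9}{8}\lambda_1$, while $\psi(\theta,(1,0))=0$ by definition; hence $\psi(\theta,\cdot)$ is discontinuous at the vertex $(1,0)$ and its supremum $\tfrac98$ over $\Lambda$ is not attained, so both conclusions of the lemma fail. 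Your diagnosis is therefore not a weakness of your write-up relative to the paper but a correct identification of an assumption both proofs silently need (automatic when $\epsilon=0$, since then $\cA_{\rho,0}(\theta)=\cA$), and your proposed repairs --- assume the arms outside the $\epsilon$-band span $\reals^d$, or carry out the design problem inside their span --- are the right ones.
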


\begin{lemma}\label{lemma:lemma_maxthm}
(Maximum theorem) Let $\theta \in \mathbb{R}^d$. Define $\psi^*(\theta)=\max _{\lambda \in \Lambda} \psi(\theta, \lambda)$ and $C^{\star}(\theta)=\arg \max _{w \in \Lambda} \psi(\theta, \lambda)$. Then $\psi^{\star}$ is continuous at $\theta$, and $C^{\star}(\theta)$ is convex, compact and non-empty. Furthermore, we have for any open neighborhood $\mathcal{V}$ of $C^{\star}(\theta)$, there exists an open neighborhood $\mathcal{U}$ of $\theta$, such that for all $\theta^{\prime} \in \mathcal{U}$, we have $C^{\star}\left(\theta^{\prime}\right) \subseteq \mathcal{V}$.

\end{lemma}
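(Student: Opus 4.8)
The plan is to recognize this as an instance of Berge's Maximum Theorem, supplemented by a separate convexity argument for the $\arg\max$ set. Throughout I take the joint continuity of $\psi$ and the existence of a maximizer with invertible design as given by Lemma~\ref{lemma:lemma_contpsi}, and I exploit that the feasible set $\Lambda$ is a fixed compact convex simplex that does not depend on $\theta$. For non-emptiness and compactness: since $\Lambda$ is compact and $\lambda \mapsto \psi(\theta, \lambda)$ is continuous, Weierstrass guarantees that the maximum $\psi^\star(\theta)$ is attained, so $C^\star(\theta) \neq \emptyset$; moreover $C^\star(\theta) = \{\lambda \in \Lambda : \psi(\theta, \lambda) = \psi^\star(\theta)\}$ is the preimage of a point under a continuous map, hence closed, and as a closed subset of the compact $\Lambda$ it is compact.

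The convexity claim is the main technical step, and I would obtain it by showing that $\lambda \mapsto \psi(\theta, \lambda)$ is concave on all of $\Lambda$, from which convexity of its set of maximizers is immediate. The key ingredient is the variational identity
$$\frac{1}{x_a^\top A_\lambda^{-1} x_a} = \min_{\{y \,:\, y^\top x_a = 1\}} y^\top A_\lambda\, y,$$
valid whenever $A_\lambda \succ 0$, obtained by solving the equality-constrained quadratic program. Because $A_\lambda = \sum_{a} \lambda_a x_a x_a^\top$ is affine in $\lambda$, the right-hand side is an infimum of functions affine in $\lambda$, hence concave in $\lambda$. I would then check that this same expression yields exactly $0$ when $A_\lambda$ is singular: if $A_\lambda v = 0$ for some $v \neq 0$, then since $\spn(\cX) = \bbR^d$ there is an arm with $x_a^\top v \neq 0$, and rescaling $v$ produces a feasible $y$ with $y^\top A_\lambda\, y = 0$, matching the definition $\psi = 0$ off the positive-definite cone. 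Consequently $\psi(\theta, \lambda) = \min_{a \in \cA_{\rho,\epsilon}(\theta)} \tfrac{1}{2}\Delta(x_a)^2 \big(\min_{\{y : y^\top x_a = 1\}} y^\top A_\lambda\, y\big)$ is a minimum of concave functions of $\lambda$, hence concave on all of $\Lambda$, and the set of maximizers of a concave function over a convex set is convex.

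For the continuity of $\psi^\star$ and the neighborhood conclusion, I would invoke Berge's Maximum Theorem: with $\psi$ jointly continuous and the constraint correspondence $\theta \mapsto \Lambda$ constant (hence continuous and compact-valued), the theorem yields that $\psi^\star$ is continuous and that $C^\star$ is upper hemicontinuous with compact values. Unpacking upper hemicontinuity for a compact-valued correspondence is precisely the stated property: for every open $\mathcal{V} \supseteq C^\star(\theta)$ there is an open $\mathcal{U} \ni \theta$ such that $C^\star(\theta') \subseteq \mathcal{V}$ for all $\theta' \in \mathcal{U}$. I expect the main obstacle to be the convexity step, both in establishing concavity through the variational identity and in correctly handling the boundary of the positive-definite cone so that the min-of-concave representation remains valid on all of $\Lambda$; the remaining parts are routine once the joint continuity from Lemma~\ref{lemma:lemma_contpsi} is in hand.
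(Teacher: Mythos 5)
Your overall route coincides with the paper's: both proofs reduce the lemma to Berge's Maximum Theorem, whose hypotheses here are (i) joint continuity of $\psi$ (imported from Lemma~\ref{lemma:lemma_contpsi}), (ii) compactness, convexity and non-emptiness of the constant constraint set $\Lambda$, and (iii) concavity of $\lambda \mapsto \psi(\theta,\lambda)$, which is what yields convexity of the argmax. Where you differ is the mechanism for (iii): the paper obtains concavity from the representation $\psi(\theta,\lambda)=\min_{\theta' \in B(\theta)} \frac{1}{2}\|\theta-\theta'\|_{A_\lambda}^2$, an infimum of functions linear in $\lambda$ indexed by the ``bad'' parameters $\theta'$, whereas you use the per-arm quadratic-program identity $1/(x_a^{\top}A_\lambda^{-1}x_a)=\min_{\{y :\, y^{\top}x_a=1\}} y^{\top}A_\lambda y$. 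Both mechanisms are sound on the region $\{\lambda : A_\lambda \succ 0\}$, and yours is arguably more self-contained.

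There is, however, a genuine gap in your boundary-matching step. To argue that your min-of-concave expression equals $0$ whenever $A_\lambda$ is singular, you take $v\in\ker A_\lambda$, $v\neq 0$, and invoke $\spn(\cX)=\bbR^d$ to produce an arm with $x_a^{\top}v\neq 0$. But the minimum defining $\psi$ runs only over $a \in \cA_{\rho,\epsilon}(\theta)$, not over all of $\cA$, and nothing forces the arm you found to lie in $\cA_{\rho,\epsilon}(\theta)$: arms inside the $\epsilon$-band around $\rho$ are excluded. If $\spn\{x_a : a\in\cA_{\rho,\epsilon}(\theta)\}$ is a proper subspace of $\bbR^d$, then taking $\lambda$ supported exactly on $\cA_{\rho,\epsilon}(\theta)$ gives a singular $A_\lambda$ whose kernel is orthogonal to every $x_a$ with $a \in \cA_{\rho,\epsilon}(\theta)$; for each such arm one then has $\min_{\{y:\, y^{\top}x_a=1\}}y^{\top}A_\lambda y = 1/(x_a^{\top}A_\lambda^{+}x_a) > 0$, so your representation is strictly positive at this $\lambda$ while $\psi$ is defined to be $0$ there. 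Hence the claimed identity between the representation and $\psi$ fails on part of the singular set, and concavity of $\psi$ does not follow directly as ``a minimum of concave functions.'' The conclusion is rescuable with one more observation: writing $\tilde\psi$ for your representation (which genuinely is concave and nonnegative on all of $\Lambda$), one has $\psi \le \tilde\psi$ with equality on the convex set $\{\lambda : A_\lambda \succ 0\}$, and since any convex combination giving positive weight to a positive-definite point is again positive definite, a short case analysis (both points definite; one definite, one singular; both singular) shows $\psi$ itself is concave. For fairness, note that the paper's own proof has the same blind spot: its ``infimum of linear functions'' representation likewise disagrees with the two-case definition of $\psi$ at exactly these singular $\lambda$, so your argument, once patched as above, is no less rigorous than the original.
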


As emphasized by \citet{kaufmann2014bai}, in stochastic bandits, directly substituting estimated parameters can be perilous. Therefore, we cannot utilize the optimal proportion $C^{\star}(\hat{\theta})$ directly. Instead, we employ the forced exploration sampling method proposed by \citet{jedra2020optbailinear}.

\begin{lemma} \label{lemma:forced_expl}
(Lemma 5 \citep{jedra2020optbailinear}) Let $\mathcal{A}_0=\left\{a_0(1), \ldots, a_0(d)\right\} \subseteq \mathcal{A}: \lambda_{\min }\left(\sum_{a \in \mathcal{A}_0} x_a x_a^{\top}\right)>0$.
Let $\left(b_t\right)_{t \geq 1}$ be an arbitrary sequence of arms. Furthermore, define for all $t \geq 1, f(t)=c_{\mathcal{A}_0} \sqrt{t}$ where $c_{\mathcal{A}_0}=\frac{1}{\sqrt{d}} \lambda_{\min }\left(\sum_{a \in \mathcal{A}_0} a a^{\top}\right)$. Consider the sampling rule, defined recursively as: $i_0=1$, and for $t \geq 0$, $i_{t+1}=\left(i_t \bmod d\right)+ \mathbbm{1}_{\left\{\lambda_{\min }\left(\sum_{s=1}^t x_s x_s^{\top}\right)<f(t)\right\}}$ and

\begin{align}\label{eq:forced}
a_{t+1}= \begin{cases}
a_0\left(i_t\right) & \text { if } \lambda_{\min }\left(\sum_{s=1}^t x_s x_s^{\top}\right)<f(t), \\ b_t & \text { otherwise. }
\end{cases}
\end{align}

Then for all $t \geq \frac{5 d}{4}+\frac{1}{4 d}+\frac{3}{2}$, we have $\lambda_{\min }\left(\sum_{s=1}^t a_s a_s^{\top}\right) \geq f(t-d-1)$.

\end{lemma}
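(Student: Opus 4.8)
The plan is to control the growth of $\lambda_{\min}(V_t)$, where $V_t=\sum_{s=1}^t x_s x_s^{\top}$ (writing $x_s = a_s$ under the usual abuse of notation), by exploiting two structural facts. First, $s\mapsto \lambda_{\min}(V_s)$ is non-decreasing, since each step adds a rank-one positive semidefinite term, so $V_s\succeq V_{s-1}$. Second, the forced/free dichotomy is governed entirely by the scalar comparison of $\lambda_{\min}(V_s)$ with $f(s)$: a spanning arm $a_0(i_s)$ is played precisely when $\lambda_{\min}(V_s)<f(s)$, and on exactly those steps the cyclic index advances. Writing $g(s)=\lambda_{\min}(V_s)$ and $\lambda_0=\lambda_{\min}(\sum_{a\in\mathcal{A}_0}x_ax_a^{\top})$, so that $f(t)=c_{\mathcal{A}_0}\sqrt t=\tfrac{\lambda_0}{\sqrt d}\sqrt t$, the key observation is that because the index advances only on forced steps, any $d$ forced steps cover all of $a_0(1),\dots,a_0(d)$ once and hence deposit a full copy of $\sum_{a\in\mathcal{A}_0}x_ax_a^{\top}$ into $V_t$. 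I would combine these through Weyl's inequality $\lambda_{\min}(A+B)\ge\lambda_{\min}(A)+\lambda_{\min}(B)$ for symmetric positive semidefinite $A,B$.

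First I would introduce the pivot $p=\max\{s\le t:\ g(s)\ge f(s)\}$, the last time the forced rule was not triggered (set $p=0$ if no such $s$ exists). If $p\ge t-d-1$, the conclusion is immediate: monotonicity of $g$ together with monotonicity of $f$ gives $g(t)\ge g(p)\ge f(p)\ge f(t-d-1)$, which is exactly the claim. This is the easy regime, where the design was recently rich enough. The substantive case is $p<t-d-1$: then every step in $\{p+1,\dots,t\}$ satisfies $g(s)<f(s)$ and is forced, so with $N=t-p$ forced steps the cyclic-coverage remark deposits at least $\lfloor N/d\rfloor$ full copies of $\sum_{a\in\mathcal{A}_0}x_ax_a^{\top}$. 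Applying Weyl's inequality to $V_t\succeq V_p+\lfloor N/d\rfloor\sum_{a\in\mathcal{A}_0}x_ax_a^{\top}$ yields $g(t)\ge g(p)+\lfloor N/d\rfloor\,\lambda_0\ge f(p)+\lfloor N/d\rfloor\,\lambda_0$, where $f(p)=\tfrac{\lambda_0}{\sqrt d}\sqrt p$ (and $f(p)=0$ when $p=0$). Substituting and dividing by $\lambda_0$, the claim reduces to the scalar inequality $\sqrt p+\sqrt d\,\lfloor (t-p)/d\rfloor\ge\sqrt{t-d-1}$, which I would verify by bounding the floor from below and squaring.

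The main obstacle is precisely this last step in its boundary forms. For $p\ge 1$ one has $\lfloor N/d\rfloor\ge 1$, and the squared inequality reduces to a relation whose right-hand side is nonpositive, so it holds comfortably; the delicate part is the initialization regime $p=0$, where the minimum eigenvalue has been built up purely by forced exploration and the bound becomes $\sqrt d\,\lfloor t/d\rfloor\ge\sqrt{t-d-1}$. Here one must also account for the at most $d-1$ leading forced steps that do not yet complete a round, for the index offset introduced by the recursion $i_{t+1}=(i_t\bmod d)+\mathbbm{1}_{\{\cdot\}}$, and for the first action $a_1=b_0$, which need not enlarge $\lambda_{\min}$. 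Tracking these floor-and-offset losses carefully, and optimizing the resulting bound over $p$ and the residue $t\bmod d$, is exactly what forces $t$ to exceed $\tfrac{5d}{4}+\tfrac{1}{4d}+\tfrac{3}{2}$; I expect this bookkeeping, rather than any conceptual difficulty, to be where the real work lies.
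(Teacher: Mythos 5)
First, a point of comparison that matters: the paper offers no proof of this lemma at all. It is imported verbatim, with citation, as Lemma 5 of \citet{jedra2020optbailinear} and used as a black box, so your attempt cannot be checked against anything in this document; it can only be judged on its own merits and against the source. Judged that way, your pivot decomposition is a genuinely different route from the one that produced the lemma's threshold, and it is correct after one repair. The repair is the off-by-one you half-anticipate: the number of forced rounds inside $\{p+1,\dots,t\}$ is $N=t-p-1$, not $t-p$, because $g(p)\ge f(p)$ means the arm played at round $p+1$ is the unforced $b_p$; only rounds $p+2,\dots,t$ are forced. With the corrected count the argument still closes. In the case $p<t-d-1$ one has $N\ge d+1$, so writing $N=kd+r$ with $k\ge 1$ and $0\le r\le d-1$, the needed scalar inequality $\sqrt{p}+k\sqrt{d}\ge\sqrt{t-d-1}$ follows by squaring: the difference of the squares is $2k\sqrt{pd}+(k^2-k+1)d-r\ge d-r\ge 1>0$. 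In particular, the case $p=0$ that you single out as delicate is not delicate at all, and no optimization over the residue $t\bmod d$ is required.

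Where your expectations are backwards is the role of the constant $\frac{5d}{4}+\frac{1}{4d}+\frac{3}{2}$: your argument never needs it. It establishes the conclusion for every $t\ge d+1$ (all that is required is $t-d-1\ge 0$ so that $f(t-d-1)$ is defined), which is strictly stronger than the lemma. The peculiar threshold is an artifact of the original proof technique, which compares the growth of $f$ over $d+1$ rounds against one full round-robin cycle: requiring $f(t)-f(t-d-1)\le c_{\mathcal{A}_0}\frac{d+1}{2\sqrt{t-d-1}}\le\sqrt{d}\,c_{\mathcal{A}_0}$ gives exactly $t\ge d+1+\frac{(d+1)^2}{4d}=\frac{5d}{4}+\frac{1}{4d}+\frac{3}{2}$. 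So your bookkeeping was never going to reproduce that constant; it simply subsumes it. Two cosmetic caveats, neither attributable to you: the paper's transcription of the rule ($a_{t+1}=a_0(i_t)$, and an index that can hit $0$ after an unforced step when $i_t=d$) is garbled relative to the source, and under the intended rule your cyclic-coverage claim — any $d$ consecutive forced rounds play each arm of $\mathcal{A}_0$ exactly once — is exactly right and is what licenses the Weyl step $\lambda_{\min}(V_t)\ge\lambda_{\min}(V_p)+\lfloor N/d\rfloor\,\lambda_0$.
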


\begin{lemma}\label{lemma:tracking}
(Lemma 6 \citep{jedra2020optbailinear})
Let $(\lambda(t))_{t \geq 1}$ be a sequence taking values in $\Lambda$, such that there exists a compact, convex and non empty subset $C$ in $\Lambda$, there exists $\varepsilon>0$ and $t_0(\varepsilon) \geq 1$ such that $\forall t \geq t_0, d_{\infty}(\lambda(t), C) \leq \varepsilon$. Define for all $a \in \mathcal{A}, N_a(0)=0$. Consider a sampling rule defined by \ref{eq:forced} and
$$
b_t=\underset{a \in \operatorname{supp}\left(\sum_{s=1}^t \lambda(s)\right)}{\arg \min }\left(N_a(t)-\sum_{s=1}^t \lambda_a(s)\right),
$$
where $N_a(0)=0$ and for $t \geq 0, N_a(t+1)=N_a(t)+\mathbbm{1}_{\left\{a_t=a\right\}}$.
Then there exists $t_1(\varepsilon) \geq t_0(\varepsilon)$ such that $\forall t \geq t_1(\varepsilon), d_{\infty}\left(\left(N_a(t) / t\right)_{a \in \mathcal{A}}, C\right) \leq\left(p_t+d-1\right) \varepsilon$ where $p_t=\left|\operatorname{supp}\left(\sum_{s=1}^t \lambda(s)\right) \backslash \mathcal{A}_0\right| \leq K-d$.
\end{lemma}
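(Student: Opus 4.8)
The plan is to treat this as a purely mechanical tracking statement: the claim concerns only the sequence $(\lambda(t))$, the abstract compact convex set $C$, and the induced counts $N_a(t)$, and makes no reference to the bandit objective. Hence the adaptation from BAI to TBP is immediate---only the \emph{meaning} of $C=C^{\star}(\theta)$ changes (it is supplied as a convex compact nonempty set, e.g.\ by Lemma \ref{lemma:lemma_maxthm}), while the lemma treats $C$ as a generic target---so I would reproduce the argument of \citet{jedra2020optbailinear} essentially verbatim. Write $\bar\lambda_a(t)=\sum_{s=1}^{t}\lambda_a(s)$ and $\Lambda_t=(\bar\lambda_a(t)/t)_{a\in\mathcal{A}}\in\Lambda$ for the Ces\`aro average of the targets, and split the rounds up to $t$ into the forced-exploration rounds (arms drawn from $\mathcal{A}_0$) and the tracking rounds (arms $b_t$). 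The whole proof then hangs on the triangle inequality $d_{\infty}(N(t)/t,C)\le d_{\infty}(N(t)/t,\Lambda_t)+d_{\infty}(\Lambda_t,C)$, and I would bound the two terms separately.

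For the second term I would use convexity of $C$. For $s\ge t_0$ we have $d_{\infty}(\lambda(s),C)\le\varepsilon$, so each such $\lambda(s)$ is an $\ell_\infty$ $\varepsilon$-perturbation of some point of $C$; averaging and using that $C$ is convex shows that $\frac{1}{t}\sum_{s=t_0}^{t}\lambda(s)$ lies within $\varepsilon$ of $C$. The remaining $t_0-1$ initial terms contribute at most $(t_0-1)/t$, which is $o(1)$, so there is a time after which $d_{\infty}(\Lambda_t,C)\le\varepsilon$ (absorbing the vanishing remainder into the choice of $t_1$).

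For the first term I would proceed in two steps. First, bound the number of forced rounds $E(t)$: since forced pulls cycle through $\mathcal{A}_0$, after $m$ of them the design matrix dominates $\lfloor m/d\rfloor\sum_{a\in\mathcal{A}_0}x_ax_a^{\top}$, so its least eigenvalue grows linearly in $m$, whereas a forced round is triggered only while that eigenvalue is below $f(t)=c_{\mathcal{A}_0}\sqrt{t}$; comparing the two gives $E(t)=O(\sqrt{t})=o(t)$ (this is the mechanism underlying Lemma \ref{lemma:forced_expl}). Second, establish the tracking estimate for the rule $b_t=\argmin_{a\in\supp(\bar\lambda(t))}(N_a(t)-\bar\lambda_a(t))$: by the usual induction, once an arm is served beyond its cumulative target it cannot be selected again until the target catches up, which keeps the signed error $N_a(t)-\bar\lambda_a(t)$ bounded on tracking rounds, while each forced round injects at most one unit of ``debt''; hence $|N_a(t)-\bar\lambda_a(t)|=O(1+E(t))=o(t)$ for every active arm. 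Dividing by $t$ makes each per-coordinate deviation at most $\varepsilon$ once $t\ge t_1$, and collecting the deviations over the $p_t$ tracking-support arms together with the $d$ forced-exploration arms, with one coordinate pinned by the constraint $\sum_a\lambda_a=1$, yields the factor $(p_t+d-1)$ and hence $d_{\infty}(N(t)/t,\Lambda_t)\le(p_t+d-1)\varepsilon$.

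I expect the tracking estimate under forced exploration to be the main obstacle. The clean induction that bounds the cumulative tracking error relies on every round obeying the $\argmin$ rule, but here the forced rounds interrupt it and inject error that must be shown not to accumulate faster than $o(t)$; controlling this interaction---and carrying out the bookkeeping that converts the per-arm bounds into the precise $(p_t+d-1)\varepsilon$ constant---is the delicate part, whereas the convexity step and the final combination via the triangle inequality are routine.
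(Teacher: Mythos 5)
Your proposal matches the paper's treatment: the paper does not reprove this lemma at all but imports it verbatim as Lemma~6 of \citet{jedra2020optbailinear}, precisely because---as you observe---it is a purely deterministic tracking statement about an abstract compact convex set $C \subseteq \Lambda$ and the induced counts, so it carries over to TBP with no modification. Your sketch of the underlying argument (Ces\`aro averaging plus convexity of $C$, the $O(\sqrt{t})$ bound on forced-exploration rounds, and the tracking induction) faithfully follows the structure of the cited proof, so there is nothing to adapt beyond taking $C = C^{\star}(\theta)$ from Lemma~\ref{lemma:lemma_maxthm}.
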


In \citet{degenne2020pureexp}, they also attempt to align with the optimal proportions established by an equivalent saddle point. However, they adopt an optimistic approach to deal with the uncertainty associated with $\hat{\theta}$.

We will leverage the Lazy update rule from \citet{jedra2020optbailinear}. Therefore, we need to ask the same conditions to guarantee almost sure asymptotical optimality and optimality in terms of expected sample complexity.

There exists a non-decreasing sequence $(\ell(t))_{t \geq 1}$ of integers with $\ell(1)=1, \ell(t) \leq t-1$ for $t>1$ and $\lim _{t \rightarrow \infty} \ell(t)=\infty$ and such that

\begin{align} \label{eq:lazy_cond_1}
\lim _{t \rightarrow \infty} \min _{s \geq \ell(t)} d_{\infty}\left(\lambda(t), C^{\star}\left(\hat{\theta}_s\right)\right)=0 . \quad \text { a.s.. }
\end{align}

As the authors pointed out, this condition is not hard to meet and it it enough to guarantee almost sure asymptotical optimality (see proof of Theorem \ref{thm:thm_ubsce}). On the other hand, to obtain an algorithm with optimal expected sample complexity, we need to consider the following condition: there exist $\alpha>0$ and a non-decreasing sequence $(\ell(t))_{t \geq 1}$ of integers with $\ell(1)=1, \ell(t) \leq t$ and $\liminf _{t \rightarrow \infty} \ell(t) / t^\gamma>0$ for some $\gamma>0$ and such that

$$\forall \varepsilon>0, \exists h(\varepsilon): \forall t \geq 1$$

\begin{align} \label{eq:lazy_cond_2}
\mathbb{P}\left(\min _{s \geq \ell(t)} d_{\infty}\left(\lambda(t), C^{\star}\left(\hat{\theta}_s\right)\right)>\varepsilon\right) \leq \frac{h(\varepsilon)}{t^{2+\alpha}}
\end{align}

\subsection{Stopping Rule}
As shown in section \ref{sec:sclb:geom_int}, the goal of the algorithm is to stop when the confidence set for the parameter has no ambiguity about the correct answer. This can be expressed as

\begin{align}\label{eq:Zt}
 Z(t) &= \min _{a \in \mathcal{A}} \frac{(|x_a^{\top}\widehat{\theta }- \rho| + \epsilon)^2}{2 x_a^{\top}\left(\sum_{s = 1}^t x_s x_s^{\top}\right)^{-1}x_a }
\end{align}


\begin{align}\label{eq:tau}
 \tau &= \inf \left\{ t \geq 1 : Z(t) > \beta(\delta, t), \sum_{s =1}^t x_s x_s^{\top}  \succeq c I_d  \right\}
\end{align}

\begin{align}\label{eq:beta}
 \beta(\delta, t) &= \sigma^2 c(\delta, t)
\end{align}

As noticed by previous works \cite{kaufmann2014bai, jedra2020optbailinear, degenne2020pureexp}, this can be interpreted as a sequential hypothesis testing that stops when the generalized likelihood ratio exceeds the threshold $\beta(\delta, t)$. Therefore, the algorithm will stop at time $\tau$ and will output the arms above and bellow the threshold:

$$ \left\{\Pi_{\rho}(\hat{\theta}), \Pi_{\rho}^{\mathsf{c}}(\hat{\theta})\right\}$$

\begin{lemma}\label{lemma:lemm_decorr}

Under any sampling rule, we have

$$\mathbb{P}[\Pi_{\rho+\epsilon}(\theta) \subseteq \Pi_{\rho}(\hat{\theta}) \land \Pi_{\rho-\epsilon}(\theta)^{\mathsf{c}} \subseteq \Pi_{\rho}^{\mathsf{c}}(\hat{\theta})^{\mathsf{c}}] \geq 1 - \delta$$

\end{lemma}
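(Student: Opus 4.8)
The plan is to deduce the claimed event from the good concentration event supplied by Lemma~\ref{lemma:lemma_conf_elps}, which holds with probability at least $1-\delta$ \emph{for any sampling rule}, and then to check that on this event the stopping rule~\eqref{eq:tau} is strong enough to pin down the sign of $x_a^\top\hat\theta-\rho$ for every arm lying outside the $\epsilon$-band of the threshold.

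First I would introduce the event
$$
\cE=\Big\{\forall t\ge d,\ \forall a\in\cA:\ |x_a^\top(\theta-\hat\theta_t)|\le \|x_a\|_{V_t^{-1}}\sqrt{\beta(\delta,t)}\Big\},
$$
where, matching the sub-Gaussian constant $R=\sigma$ of the Gaussian noise with the choice $\beta(\delta,t)=\sigma^2 c(\delta,t)$ from~\eqref{eq:beta}, Lemma~\ref{lemma:lemma_conf_elps} gives $\Pr[\cE]\ge 1-\delta$. The key point is that this bound is uniform in $t$ and in the arm, and depends on the sampling only through the adapted design $V_t=\sum_{s\le t}x_sx_s^\top$; hence it is valid at the random stopping time $\tau$, whatever the sampling rule, and the condition $\sum_{s\le\tau}x_sx_s^\top\succeq cI_d$ in~\eqref{eq:tau} guarantees that $V_\tau$ is invertible and the norms $\|x_a\|_{V_\tau^{-1}}$ are finite.

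Next I would unpack the stopping condition. By~\eqref{eq:Zt} and~\eqref{eq:tau}, at $t=\tau$ we have $Z(\tau)>\beta(\delta,\tau)$, i.e.\ for every $a\in\cA$,
$$
|x_a^\top\hat\theta-\rho|+\epsilon>\sqrt{2\,\beta(\delta,\tau)}\,\|x_a\|_{V_\tau^{-1}}\ge\sqrt{\beta(\delta,\tau)}\,\|x_a\|_{V_\tau^{-1}}.
$$
Working on $\cE$, I would then verify the two inclusions by contradiction. For the first, take $a\in\Pi_{\rho+\epsilon}(\theta)$, so $x_a^\top\theta>\rho+\epsilon$, and suppose $a\notin\Pi_\rho(\hat\theta)$, i.e.\ $x_a^\top\hat\theta\le\rho$; then $|x_a^\top\hat\theta-\rho|=\rho-x_a^\top\hat\theta$, and concentration on $\cE$ gives $x_a^\top\hat\theta\ge x_a^\top\theta-\sqrt{\beta(\delta,\tau)}\|x_a\|_{V_\tau^{-1}}>\rho+\epsilon-\sqrt{\beta(\delta,\tau)}\|x_a\|_{V_\tau^{-1}}$, whence $|x_a^\top\hat\theta-\rho|+\epsilon<\sqrt{\beta(\delta,\tau)}\|x_a\|_{V_\tau^{-1}}$, contradicting the stopping inequality. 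The second inclusion is symmetric: for $a\in\Pi_{\rho-\epsilon}(\theta)^{\mathsf c}$ one has $x_a^\top\theta\le\rho-\epsilon$, and assuming $a\in\Pi_\rho(\hat\theta)$ (so $x_a^\top\hat\theta>\rho$) the same concentration bound yields $|x_a^\top\hat\theta-\rho|+\epsilon=x_a^\top\hat\theta-\rho+\epsilon\le\sqrt{\beta(\delta,\tau)}\|x_a\|_{V_\tau^{-1}}$, again contradicting stopping. Hence on $\cE$ both $\Pi_{\rho+\epsilon}(\theta)\subseteq\Pi_\rho(\hat\theta)$ and $\Pi_{\rho-\epsilon}(\theta)^{\mathsf c}\subseteq\Pi_\rho(\hat\theta)^{\mathsf c}$ hold, so the target event contains $\cE$ and therefore has probability at least $1-\delta$.

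The main obstacle I anticipate is bookkeeping the constants so that the confidence radius and the stopping threshold are compatible: one must use the \emph{anytime} (uniform-in-$t$) form of Lemma~\ref{lemma:lemma_conf_elps} so that the bound survives at the data-dependent time $\tau$, and one must align the definition of $\beta(\delta,t)$ in~\eqref{eq:beta} with the scaling $R\sqrt{\beta}$ of the confidence ellipsoid. The factor $2$ in~\eqref{eq:Zt}, inherited from the Gaussian KL divergence, is exactly what makes the stopping threshold dominate the confidence radius and close each contradiction. Everything else is the elementary sign analysis above, carried out separately for arms above $\rho+\epsilon$ and below $\rho-\epsilon$; arms inside the band carry no requirement, which is precisely how the relaxation enters, through the additive $\epsilon$ slack.
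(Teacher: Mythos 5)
Your proof is correct and is essentially the paper's own argument in contrapositive form: the paper shows that the bad event (misclassification together with stopping) is contained in the concentration-failure event of Lemma~\ref{lemma:lemma_conf_elps}, whereas you work on the good concentration event and derive the two inclusions by contradiction with the stopping inequality $Z(\tau)>\beta(\delta,\tau)$ --- the same ingredients and the same sign analysis with the additive $\epsilon$ slack. The only point to keep explicit, which the paper handles by phrasing its bound as $\mathbb{P}[\tau<\infty,\,\text{bad}]\leq\delta$, is that your argument is meaningful only on $\{\tau<\infty\}$, since $\hat\theta_\tau$ is otherwise undefined.
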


\subsection{Sample complexity analysis}

In this section we establish algorithm's asymptotic optimality both with high probability and in expectation.

\begin{theorem}\label{thm:thm_ubsce}

Lazy Track-Threshold-and-Stop satisfies the same sample complexity upper bound

$$ \mathbb{P}[\limsup _{\delta \rightarrow 0} \frac{\tau}{\log \left(\frac{1}{\delta}\right)} \lesssim \sigma^2 T_\theta^{\star}] = 1$$

\end{theorem}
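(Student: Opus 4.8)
The plan is to follow the classical Track-and-Stop analysis of \citet{kaufmann2014bai}, as adapted to the linear setting by \citet{jedra2020optbailinear}, with the driving quantity changed from the best-arm gap to the threshold gap $\Delta(x_a)=|x_a^\top\theta-\rho|+\epsilon$. Everything rests on the identity
$$\frac{Z(t)}{t}=\psi\!\left(\hat{\theta}_t,\,(N_a(t)/t)_{a\in\cA}\right),$$
which follows from \eqref{eq:Zt} and the bookkeeping $\sum_{s=1}^t x_s x_s^\top=\sum_{a\in\cA}N_a(t)\,x_a x_a^\top = t\,A_{N(t)/t}$. The first task is strong consistency, $\hat{\theta}_t\to\theta$ a.s.: the forced-exploration component of the sampling rule (Lemma \ref{lemma:forced_expl}) guarantees $\lambda_{\min}\!\left(\sum_{s=1}^t x_s x_s^\top\right)\ge f(t-d-1)\to\infty$, so $V_t$ becomes arbitrarily well conditioned, and combining $\hat{\theta}_t-\theta=V_t^{-1}S_t$ with the concentration bound of Lemma \ref{lemma:lemma_conf_elps} forces $\hat{\theta}_t\to\theta$. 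As a byproduct the auxiliary design condition $\sum_{s=1}^t x_s x_s^\top\succeq cI_d$ in \eqref{eq:tau} holds for all large $t$ and does not affect the asymptotics.

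Next I would show $d_\infty\!\big((N_a(t)/t)_a,\,C^{\star}(\theta)\big)\to 0$ a.s. Since $\hat{\theta}_s\to\theta$, the upper hemicontinuity from the maximum theorem (Lemma \ref{lemma:lemma_maxthm}) ensures that for every open neighborhood $\cV$ of $C^{\star}(\theta)$ we have $C^{\star}(\hat{\theta}_s)\subseteq\cV$ for all large $s$; the lazy condition \eqref{eq:lazy_cond_1} together with $\ell(t)\to\infty$ then places $\lambda(t)$ inside $\cV$ eventually, i.e. $d_\infty(\lambda(t),C^{\star}(\theta))\to 0$. Feeding this into the tracking guarantee (Lemma \ref{lemma:tracking}), whose hypotheses require precisely a compact, convex, nonempty target set---supplied by Lemma \ref{lemma:lemma_maxthm}---gives $d_\infty((N_a(t)/t)_a,C^{\star}(\theta))\to 0$. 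By continuity of $\psi$ (Lemma \ref{lemma:lemma_contpsi}), and since $\psi(\theta,\cdot)$ attains the value $\psi^{\star}(\theta)$ on all of $C^{\star}(\theta)$, we conclude $Z(t)/t=\psi(\hat{\theta}_t,N(t)/t)\to\psi^{\star}(\theta)=1/T_\theta^{\star}$ almost surely.

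Finally I would invert the stopping rule. Fix $\zeta\in(0,1)$; on the almost-sure event above there is a random $t_\zeta$ with $Z(t)\ge (1-\zeta)\,t/T_\theta^{\star}$ for all $t\ge t_\zeta$. Since $\beta(\delta,t)=\sigma^2 c(t,\delta)$ with $c(t,\delta)=2\log(1/\delta)+O(\log t)$ by \eqref{eq:beta} and Lemma \ref{lemma:lemma_conf_elps}, the threshold grows only logarithmically in $t$, so the crossing condition $Z(t)>\beta(\delta,t)$ is triggered once $(1-\zeta)t/T_\theta^{\star}$ surpasses $\beta(\delta,t)$. Solving self-consistently (the $O(\log t)$ term being negligible against the linear growth) yields $\tau\le \tfrac{1}{1-\zeta}\sigma^2 T_\theta^{\star}\,c(\tau,\delta)$, whence
$$\limsup_{\delta\to 0}\frac{\tau}{\log(1/\delta)}\;\le\;\frac{2}{1-\zeta}\,\sigma^2 T_\theta^{\star}\qquad\text{a.s.};$$
letting $\zeta\downarrow 0$ gives a bound of the claimed order $\sigma^2 T_\theta^{\star}$, which is exactly the content of the $\lesssim$ statement.

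The main obstacle is the middle step: transferring set-valued convergence through the lazy updates and the tracking rule. Unlike the single-answer setting, $C^{\star}(\theta)$ is genuinely a convex body rather than a single point---this is the multiple-correct-answers phenomenon of \citet{degenne2019multcorr}---so there is no canonical optimal proportion to aim at. The argument must carefully chain the upper hemicontinuity of $\theta\mapsto C^{\star}(\theta)$, the lazy displacement \eqref{eq:lazy_cond_1}, and the tracking bound of Lemma \ref{lemma:tracking}, all expressed through the distance $d_\infty$ to a convex set; the convexity and compactness assertions of Lemma \ref{lemma:lemma_maxthm} are precisely what make this chaining legitimate and keep the whole Track-and-Stop scheme well defined.
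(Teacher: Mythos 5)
Your proposal is correct and follows essentially the same route as the paper: establish the almost-sure event that $\hat{\theta}_t \to \theta$ and $d_{\infty}\left((N_a(t)/t)_{a\in\cA}, C^{\star}(\theta)\right) \to 0$ (which the paper outsources to Lemmas 3, 5 and Proposition 1 of \citet{jedra2020optbailinear}, and which you re-derive from the same ingredients: forced exploration, the lazy condition, upper hemicontinuity of $C^{\star}$, and the tracking lemma), then use continuity of $\psi$ together with the identity $Z(t) = t\,\psi\left(\hat{\theta}_t, (N_a(t)/t)_{a\in\cA}\right)$ to get $Z(t) \geq t(1-\xi)\psi^{\star}(\theta)$ eventually, and finally invert the stopping rule against the threshold $\beta(\delta,t) = 2\sigma^2\log(1/\delta) + O(\log t)$ exactly as the paper does via Lemma 8 of \citet{jedra2020optbailinear}. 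No genuine differences or gaps.
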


\begin{theorem}\label{thm:thm_ubsce_exp}

Lazy Track-Threshold-and-Stop satisfies the same sample complexity upper bound

$$\limsup _{\delta \rightarrow 0} \frac{\mathbb{E}[\tau]}{\log \left(\frac{1}{\delta}\right)} \lesssim \sigma^2 T_\theta^{\star}$$

\end{theorem}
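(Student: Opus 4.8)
The plan is to convert the deterministic mechanism already established almost surely in Theorem \ref{thm:thm_ubsce} into a bound on $\mathbb{E}[\tau]$ through the tail-sum identity $\mathbb{E}[\tau] = \sum_{t \geq 0}\mathbb{P}[\tau > t]$, following Theorem 14 of \citet{kaufmann2014bai} and its linear counterpart in \citet{jedra2020optbailinear}. Fix arbitrary slacks $\nu, \kappa \in (0,1)$ and set the target horizon $T(\delta) = \frac{(1+\nu)\,2\sigma^2 T_\theta^\star}{1-\kappa}\log(1/\delta)$. Splitting the sum at $T(\delta)$ gives $\mathbb{E}[\tau] \leq T(\delta) + \sum_{t > T(\delta)}\mathbb{P}[\tau > t]$, so the whole task reduces to showing that this tail sum is bounded by a constant $M$ independent of $\delta$. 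The leading term $T(\delta)$, after division by $\log(1/\delta)$, already produces the desired constant, so the novelty relative to the almost-sure statement is purely the quantitative tail control.

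To that end I would introduce, for each $t$, a good event $\mathcal{B}_t$ on which: (i) the forced-exploration rule \eqref{eq:forced} of Lemma \ref{lemma:forced_expl} has forced $\lambda_{\min}(\sum_{s \leq u} x_s x_s^\top) \gtrsim \sqrt{u}$ for every $u \geq \ell(t)$, so that the confidence inequality of Lemma \ref{lemma:lemma_conf_elps} places every estimate $\hat\theta_s$, $s \geq \ell(t)$, in a small ball around $\theta$; and (ii) the quantitative lazy guarantee \eqref{eq:lazy_cond_2} holds, $\min_{s \geq \ell(t)} d_\infty(\lambda(t), C^\star(\hat\theta_s)) \leq \varepsilon$. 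On (i), the upper hemicontinuity of $C^\star$ from the maximum theorem (Lemma \ref{lemma:lemma_maxthm}) turns (ii) into $d_\infty(\lambda(t), C^\star(\theta)) \leq 2\varepsilon$, and the tracking Lemma \ref{lemma:tracking} transfers this to the realized frequencies, $d_\infty((N_a(t)/t)_a, C^\star(\theta)) \leq (K-1)\varepsilon$. Using the continuity of $\psi$ (Lemma \ref{lemma:lemma_contpsi}) together with the identity $\psi^\star(\theta) = 1/T_\theta^\star$, and recognizing the stopping statistic \eqref{eq:Zt} as the empirical value $Z(t) = t\,\psi(\hat\theta_t, N(t)/t)$ — with the minimum governed by the constraining arms in $\cA_{\rho,\epsilon}(\theta)$ once $\hat\theta_t$ is close enough to stabilize the classification — one obtains $Z(t) \geq t(1-\kappa)/T_\theta^\star$ on $\mathcal{B}_t$ for $\varepsilon$ small, exactly as in Theorem \ref{thm:thm_ubsce}.

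Because $\beta(\delta,t) = \sigma^2 c(\delta,t)$ with $c(\delta,t) = 2\log(1/\delta) + d\log(tL^2/(d\lambda_{\min}(V_0))) + C/\lambda_{\min}(V_0)$ grows only logarithmically in $t$, for every $t \geq T(\delta)$ exceeding an absolute constant (so that the $d\log t$ term is dominated and the design constraint $\sum_s x_s x_s^\top \succeq cI_d$ from Lemma \ref{lemma:forced_expl} holds) the bound $Z(t) \geq t(1-\kappa)/T_\theta^\star$ forces $Z(t) > \beta(\delta,t)$, so the stopping rule \eqref{eq:tau} fires and $\{\tau > t\} \subseteq \mathcal{B}_t^c$. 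Condition \eqref{eq:lazy_cond_2} together with the polynomial concentration behind Lemma \ref{lemma:lemma_conf_elps} gives $\mathbb{P}[\mathcal{B}_t^c] \leq h(\varepsilon)\,t^{-(2+\alpha)}$, whence $\sum_{t > T(\delta)}\mathbb{P}[\tau > t] \leq \sum_{t \geq 1} h(\varepsilon)\,t^{-(2+\alpha)} =: M < \infty$. Therefore $\mathbb{E}[\tau] \leq T(\delta) + M$; dividing by $\log(1/\delta)$, letting $\delta \to 0$, and then $\nu, \kappa \downarrow 0$ yields $\limsup_{\delta \to 0}\mathbb{E}[\tau]/\log(1/\delta) \leq 2\sigma^2 T_\theta^\star \lesssim \sigma^2 T_\theta^\star$.

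The main obstacle is precisely the uniform, polynomially-small control of $\mathbb{P}[\mathcal{B}_t^c]$. The almost-sure Theorem \ref{thm:thm_ubsce} needs only the convergence \eqref{eq:lazy_cond_1}, but here I must quantify how fast the estimate and the tracked proportions concentrate, uniformly over all $s \geq \ell(t)$; this is exactly the role of the stronger condition \eqref{eq:lazy_cond_2}, whose $t^{-(2+\alpha)}$ rate is what makes the tail summable, and of the forced-exploration lower bound on $\lambda_{\min}(V_s)$, which makes the confidence radius in Lemma \ref{lemma:lemma_conf_elps} shrink fast enough to feed that rate (using $\liminf \ell(t)/t^\gamma > 0$ so the window $s \geq \ell(t)$ is long enough). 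A secondary subtlety is verifying that the full-arm minimum in \eqref{eq:Zt} is controlled by $\cA_{\rho,\epsilon}(\theta)$ on $\mathcal{B}_t$ — band arms carry the $+\epsilon$ margin and so are resolved no later than the constraining arms — and that the slowly growing $d\log t$ term in $c(\delta,t)$ never overtakes the linear growth of $Z(t)$, which is what necessitates the strict $(1+\nu)$ slack in $T(\delta)$ rather than the exact leading constant.
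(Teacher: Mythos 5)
Your proposal follows essentially the same route as the paper's proof: the paper likewise constructs good events combining estimator concentration (enabled by forced exploration and the confidence bound of Lemma \ref{lemma:lemma_conf_elps}) with the quantitative lazy condition \eqref{eq:lazy_cond_2}, uses Lemmas \ref{lemma:lemma_contpsi}, \ref{lemma:lemma_maxthm} and \ref{lemma:tracking} to obtain $Z(t) \geq t(1-\varepsilon)(T_\theta^\star)^{-1}$ on these events, deduces that stopping must occur beyond $T(\delta) \approx \frac{\sigma^2}{1-\varepsilon} T_\theta^\star \log(1/\delta)$ so that $\{\tau > T\}$ is contained in the bad event, and bounds $\mathbb{E}[\tau]$ by the tail sum of the bad-event probabilities, shown to be finite and independent of $\delta$. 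The only cosmetic differences are that the paper's events are intersections over a window of times (which is what actually licenses the application of Lemma \ref{lemma:tracking}, and which you would need to add to your per-$t$ events $\mathcal{B}_t$), and that the estimator-concentration piece is controlled by a stretched-exponential bound $t^{d/4}\exp(-c\,\xi(\varepsilon)^2\sqrt{t})$ rather than the $t^{-(2+\alpha)}$ rate you quote; neither change affects summability or the final constant.
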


\begin{algorithm}[H] \caption{Lazy Track-Threshold-and-Stop} \label{algorithm_prob6} \begin{algorithmic}[1] 

\STATE \textbf{Imput}: Arms $\mathcal{A}$, confidence level $\delta$, set $\mathcal{T}$ of lazy update.

\STATE \textbf{Initialization} $t=0$, $i=0$, $A_0=0$, $Z(0)=0$,$N(0)=(N_a(0))_{a \in \mathcal{A}}=0$
\STATE \textbf{while}: $(\lambda_{min}(A_t) < c)$ or $Z(t) < \beta(\delta, t)$ \textbf{do}
\STATE $\quad$ \textbf{if} $\lambda_{min}(A_t) < f(t)$ \textbf{then}
\STATE $\quad$ $\quad$ $a \xleftarrow{} a_0(i+1), i \xleftarrow{} (i + 1 mod d) $
\STATE $\quad$ \textbf{else}
\STATE $\quad$ $\quad$ $a \leftarrow \arg \min _{b \in \operatorname{supp}\left(\sum_{s=1}^t w(s)\right)}\left(N_b(t)-\sum_{s=1}^t w_b(t)\right)$
\STATE $\quad$ \textbf{end if}
\STATE $\quad$ $t \leftarrow t+1$, 
\STATE $\quad$ sample arm $a$ and 
\STATE $\quad$ update $N(t), \hat{\theta}_{t}, Z(t), A_{t} \leftarrow A_{t-1}+a a^{\top}, w(t) \leftarrow w(t-1)$
\STATE $\quad$ \textbf{if} $t \in \mathcal{T}$ \textbf{then}
\STATE $\quad$ $\quad$ $w(t) = \arg \max _{w \in \Lambda} \psi (\widehat{\theta}_t, w)$
\STATE $\quad$ \textbf{end if}
\STATE \textbf{end while}
\STATE \textbf{return} $\Pi(\widehat{\theta}_{\tau}) = \{ a\in\mathcal{A} | x_a^{\top}\widehat{\theta}_{\tau} > \rho\}$
\end{algorithmic} 
\end{algorithm}

\section{Experiments}
\label{exp}
In this section we compare the performance of Lazy TTS with some algorithms from BAI in the linear case: LinGapE \cite{xu2018fully}, G allocation \citep{soare2014bailinear}, RAGE \citep{fiez2019sequential}. To the best of our knowledge, there is only one algorithm that addresses TBP in the linear case proposed by \citep{degenne2020pureexp}. To have a fair comparison with algorithms designed for BAI, we will adapt them to handle the TBP. The main idea is that most of these algorithms are designed to reduce the uncertainty of the gaps between the optimal and the rest of arms $x_{a^*}^{\top}\theta - x_{a}^{\top}\theta$. Instead, we need to reduce the uncertainty of the gaps with respect to the threshold $|x_{a}^{\top}\theta - \rho|$. We implement the sampling rule of each algorithm and use the same stopping rule for all of them. Another modification we need to make is modifying the standard experiments on synthetic data. 

\textbf{Linear bandits: modified benchmark example.} One of the benchmark examples in the linear bandit BAI literature introduced by \citet{soare2014bailinear} is the following. Consider $\cX = \{ e_1, \dots, e_d, x'\} \subseteq \bbR^d$ where $e_i$ is the $i$-standard basis vector, $x' = \cos (\alpha)e_1 + \sin (\alpha)e_2$ with $\alpha$ small,  and $\theta = e_1$ so that $1 = \arg \max _{a \in \cA} x_a^{\top}\theta$. This setting results in a hard problem for BAI because the gap between $e_1$ and $x'$ is small, which implies a large complexity of the problem. For TBP this is not necessarily a hard problem, if we choose a threshold $\rho = 0.5$, all arms would have a gap of almost $0.5$ with respect to the threshold. We proposed a slightly modified version of this setting such that it is hard to distinguish if one of the arms is above or below the threshold. Consider similar setting as before, $\cX = \{ e_1, \dots, e_d, x', x''\} \subseteq \bbR^d$ where $e_i$ is the $i$-standard basis vector, $x' = 0.55(\cos (\alpha)e_1 + \sin (\alpha)e_2)$, $x'' = 0.45(\cos (-\alpha)e_1 + \sin (-\alpha)e_2)$ and $\alpha=0.1$ small. Let $\rho = 5$, $\theta = 10e_1$, such that $e_1^{\top}\theta > \rho$, $x^{\top}\theta < \rho$ for $x\in \cX \setminus \{e_1\}$, and $x'^{\top}\theta, x''^{\top}\theta \approx \rho$.

\textbf{Uniform Distribution on a Sphere.} In this example, $\cX$ is sampled from a unit
sphere of dimension $d = 2$ centered at the origin. We set $\theta$ randomly such that $\|\theta\| = 10$, and $\rho = 0$. To control the complexity across the experiments, we discard arms with mean reward in $(-0.15, 0.15)$.


\section{Summary and discussion}
\label{discuss}

In this work, we establish a finite sample complexity lower bound for TBP in the linear case and present an asymptotically optimal algorithm inspired by BAI. We hope that our work helps to extend TBP to other structured frameworks, such as Generalized Linear Models. Additionally, we are interested in exploring scenarios where we can only sample each arm once. This setting is known as Active Search (AS) or Adaptive Sampling for Discovery (ASD) and has been studied for the regret minimization objective. We believe that by combining both directions, we can develop sampling strategies that accelerate scientific discoveries in situations where there is no a fixed budget, but instead we want to know when to stop sampling.

\begin{table}[t]
\caption{Average samples complexity for different feature space dimensions}
\label{sample-table}
\vskip 0.15in
\begin{center}
\begin{small}
\begin{sc}
\begin{tabular}{ccccccc}
\toprule
$d$ &  random &  lazy TTS (ours)&  LinGapE &  $\mathcal{X}$-static & RAGE & LinGame\\
\midrule
2&	86&	\textbf{45}&	58&	72&	55&	46\\
5&	266&	\textbf{69}&	92&	362&	261&	100\\
10&	1066&	\textbf{136}&	168&	1240&	818&	217\\
20&	3786&	\textbf{353}&	481&	5299&	1938&	597\\
\bottomrule
\end{tabular}
\end{sc}
\end{small}
\end{center}
\vskip -0.1in
\end{table}

\begin{table}[t]
\caption{Average samples complexity for different number of arms}
\label{sample-table}
\vskip 0.15in
\begin{center}
\begin{small}
\begin{sc}
\begin{tabular}{ccccccc}
\toprule
$d$ &  random &  lazy TTS (ours) &  LinGapE &  $\mathcal{X}$-static & RAGE & LinGame\\
\midrule
100  &    926 &    \textbf{198} &   205 &   381 & 353 &   222 \\
200  &   1886 &    390 &   \textbf{389} &   646 & 552 &   484 \\
300  &   2324 &    \textbf{424} &   448 &   681 & 563 &   528 \\
500  &   3226 &    \textbf{690} &   697 &   1144 & 959 &   844 \\
1000 &   4236 &    792 &   \textbf{725} &   1274 & 876 &  1251 \\
\bottomrule
\end{tabular}
\end{sc}
\end{small}
\end{center}
\vskip -0.1in
\end{table}

\bibliographystyle{achemso}
\bibliography{references}  

\providecommand{\latin}[1]{#1}
\makeatletter
\providecommand{\doi}
  {\begingroup\let\do\@makeother\dospecials
  \catcode`\{=1 \catcode`\}=2 \doi@aux}
\providecommand{\doi@aux}[1]{\endgroup\texttt{#1}}
\makeatother
\providecommand*\mcitethebibliography{\thebibliography}
\csname @ifundefined\endcsname{endmcitethebibliography}  {\let\endmcitethebibliography\endthebibliography}{}
\begin{mcitethebibliography}{19}
\providecommand*\natexlab[1]{#1}
\providecommand*\mciteSetBstSublistMode[1]{}
\providecommand*\mciteSetBstMaxWidthForm[2]{}
\providecommand*\mciteBstWouldAddEndPuncttrue
  {\def\EndOfBibitem{\unskip.}}
\providecommand*\mciteBstWouldAddEndPunctfalse
  {\let\EndOfBibitem\relax}
\providecommand*\mciteSetBstMidEndSepPunct[3]{}
\providecommand*\mciteSetBstSublistLabelBeginEnd[3]{}
\providecommand*\EndOfBibitem{}
\mciteSetBstSublistMode{f}
\mciteSetBstMaxWidthForm{subitem}{(\alph{mcitesubitemcount})}
\mciteSetBstSublistLabelBeginEnd
  {\mcitemaxwidthsubitemform\space}
  {\relax}
  {\relax}

\bibitem[Andrea~Locatelli(2016)]{locatelli2016thresbandit}
Andrea~Locatelli,~A.~C.,~Maurilio~Gutzeit An optimal algorithm for the Thresholding Bandit Problem. \textbf{2016}, 1690--1698\relax
\mciteBstWouldAddEndPuncttrue
\mciteSetBstMidEndSepPunct{\mcitedefaultmidpunct}
{\mcitedefaultendpunct}{\mcitedefaultseppunct}\relax
\EndOfBibitem
\bibitem[Kano \latin{et~al.}(2019)Kano, Honda, Sakamaki, Matsuura, Nakamura, and Sugiyama]{kano2018thresbandit}
Kano,~H.; Honda,~J.; Sakamaki,~K.; Matsuura,~K.; Nakamura,~A.; Sugiyama,~M. Good arm identification via bandit feedback. \emph{Machine Learning} \textbf{2019}, \emph{108}, 721--745\relax
\mciteBstWouldAddEndPuncttrue
\mciteSetBstMidEndSepPunct{\mcitedefaultmidpunct}
{\mcitedefaultendpunct}{\mcitedefaultseppunct}\relax
\EndOfBibitem
\bibitem[Wei~Chen(2013)]{chen2013combinatorialbandit}
Wei~Chen,~Y.~Y.,~Yajun~Wang Combinatorial Multi-Armed Bandit: General Framework, Results and Applications. International Conference on Machine Learning. 2013; pp 151--159\relax
\mciteBstWouldAddEndPuncttrue
\mciteSetBstMidEndSepPunct{\mcitedefaultmidpunct}
{\mcitedefaultendpunct}{\mcitedefaultseppunct}\relax
\EndOfBibitem
\bibitem[Auer(2002)]{auer2002linearbandit}
Auer,~P. Using Confidence Bounds for Exploitation-Exploration Trade-offs. \emph{Journal of Machine Learning Research} \textbf{2002}, 397--422\relax
\mciteBstWouldAddEndPuncttrue
\mciteSetBstMidEndSepPunct{\mcitedefaultmidpunct}
{\mcitedefaultendpunct}{\mcitedefaultseppunct}\relax
\EndOfBibitem
\bibitem[Robbins(1952)]{robbins1952bandit}
Robbins,~H. Some aspects of the sequential design of experiments. \emph{Bulletin of the American Mathematical Society} \textbf{1952}, 527--535\relax
\mciteBstWouldAddEndPuncttrue
\mciteSetBstMidEndSepPunct{\mcitedefaultmidpunct}
{\mcitedefaultendpunct}{\mcitedefaultseppunct}\relax
\EndOfBibitem
\bibitem[Paat~Rusmevichientong(2010)]{rusmevichientong2010linearbandit}
Paat~Rusmevichientong,~J. N.~T. Linearly Parameterized Bandits. \emph{Mathematics of Operations Research} \textbf{2010}, \emph{35}\relax
\mciteBstWouldAddEndPuncttrue
\mciteSetBstMidEndSepPunct{\mcitedefaultmidpunct}
{\mcitedefaultendpunct}{\mcitedefaultseppunct}\relax
\EndOfBibitem
\bibitem[Yasin Abbasi-Yadkori(2011)]{abbasiyadkori2011linearbandit}
Yasin Abbasi-Yadkori,~C.~S.,~D{\'a}vid~P{\'a}l Improved algorithms for linear stochastic bandits. \emph{Advances in neural information processing systems} \textbf{2011}, \emph{24}, 2312--2320\relax
\mciteBstWouldAddEndPuncttrue
\mciteSetBstMidEndSepPunct{\mcitedefaultmidpunct}
{\mcitedefaultendpunct}{\mcitedefaultseppunct}\relax
\EndOfBibitem
\bibitem[Marta~Soare(2014)]{soare2014bailinear}
Marta~Soare,~R.,~Alessandro~Lazaric Best-Arm Identification in Linear Bandits. \emph{Advances in neural information processing systems} \textbf{2014}, \emph{27}\relax
\mciteBstWouldAddEndPuncttrue
\mciteSetBstMidEndSepPunct{\mcitedefaultmidpunct}
{\mcitedefaultendpunct}{\mcitedefaultseppunct}\relax
\EndOfBibitem
\bibitem[Soare(2015)]{soare2015bailinear}
Soare,~M. Sequential Resource Allocation in Linear Stochastic Bandits. Sequential Resource Allocation in Linear Stochastic Bandits. 2015\relax
\mciteBstWouldAddEndPuncttrue
\mciteSetBstMidEndSepPunct{\mcitedefaultmidpunct}
{\mcitedefaultendpunct}{\mcitedefaultseppunct}\relax
\EndOfBibitem
\bibitem[Yassir~Jedra(2020)]{jedra2020optbailinear}
Yassir~Jedra,~A.~P. Optimal Best-arm Identification in Linear Bandits. \emph{Advances in neural information processing systems} \textbf{2020}, \emph{33}\relax
\mciteBstWouldAddEndPuncttrue
\mciteSetBstMidEndSepPunct{\mcitedefaultmidpunct}
{\mcitedefaultendpunct}{\mcitedefaultseppunct}\relax
\EndOfBibitem
\bibitem[Degenne \latin{et~al.}(2020)Degenne, Menard, Shang, and Valko]{degenne2020pureexp}
Degenne,~R.; Menard,~P.; Shang,~X.; Valko,~M. Gamification of Pure Exploration for Linear Bandits. Proceedings of the 37th International Conference on Machine Learning. 2020; pp 2432--2442\relax
\mciteBstWouldAddEndPuncttrue
\mciteSetBstMidEndSepPunct{\mcitedefaultmidpunct}
{\mcitedefaultendpunct}{\mcitedefaultseppunct}\relax
\EndOfBibitem
\bibitem[Degenne and Koolen(2019)Degenne, and Koolen]{degenne2019multcorr}
Degenne,~R.; Koolen,~W.~M. Pure Exploration with Multiple Correct Answers. Advances in Neural Information Processing Systems. 2019\relax
\mciteBstWouldAddEndPuncttrue
\mciteSetBstMidEndSepPunct{\mcitedefaultmidpunct}
{\mcitedefaultendpunct}{\mcitedefaultseppunct}\relax
\EndOfBibitem
\bibitem[Tao \latin{et~al.}(2018)Tao, Blanco, and Zhou]{tao2018bestarm}
Tao,~C.; Blanco,~S.; Zhou,~Y. Best Arm Identification in Linear Bandits with Linear Dimension Dependency. International Conference on Machine Learning. 2018; pp 4884--4893\relax
\mciteBstWouldAddEndPuncttrue
\mciteSetBstMidEndSepPunct{\mcitedefaultmidpunct}
{\mcitedefaultendpunct}{\mcitedefaultseppunct}\relax
\EndOfBibitem
\bibitem[Xu \latin{et~al.}(2018)Xu, Honda, and Sugiyama]{xu2018fully}
Xu,~L.; Honda,~J.; Sugiyama,~M. A fully adaptive algorithm for pure exploration in linear bandits. Proceedings of the Twenty-First International Conference on Artificial Intelligence and Statistics. 2018; pp 843--851\relax
\mciteBstWouldAddEndPuncttrue
\mciteSetBstMidEndSepPunct{\mcitedefaultmidpunct}
{\mcitedefaultendpunct}{\mcitedefaultseppunct}\relax
\EndOfBibitem
\bibitem[Gabillon \latin{et~al.}(2012)Gabillon, Ghavamzadeh, and Lazaric]{gabillon2012bestarm}
Gabillon,~V.; Ghavamzadeh,~M.; Lazaric,~A. Best Arm Identification: A Unified Approach to Fixed Budget and Fixed Confidence. Advances in Neural Information Processing Systems. 2012; pp 3212--3220\relax
\mciteBstWouldAddEndPuncttrue
\mciteSetBstMidEndSepPunct{\mcitedefaultmidpunct}
{\mcitedefaultendpunct}{\mcitedefaultseppunct}\relax
\EndOfBibitem
\bibitem[Garivier and Kaufmann(2016)Garivier, and Kaufmann]{kaufmann2014bai}
Garivier,~A.; Kaufmann,~E. Optimal Best Arm Identification with Fixed Confidence. 29th Annual Conference on Learning Theory. 2016; pp 998--1027\relax
\mciteBstWouldAddEndPuncttrue
\mciteSetBstMidEndSepPunct{\mcitedefaultmidpunct}
{\mcitedefaultendpunct}{\mcitedefaultseppunct}\relax
\EndOfBibitem
\bibitem[Fiez \latin{et~al.}(2019)Fiez, Jain, Jamieson, and Ratliff]{fiez2019sequential}
Fiez,~T.; Jain,~L.; Jamieson,~K.~G.; Ratliff,~L. Sequential Experimental Design for Transductive Linear Bandits. Advances in Neural Information Processing Systems. 2019\relax
\mciteBstWouldAddEndPuncttrue
\mciteSetBstMidEndSepPunct{\mcitedefaultmidpunct}
{\mcitedefaultendpunct}{\mcitedefaultseppunct}\relax
\EndOfBibitem
\bibitem[Tor~Lattimore(2018)]{lattimore2018banditalgo}
Tor~Lattimore,~C.~S. \emph{Bandit Algorithms}; Cambridge University Press, 2018\relax
\mciteBstWouldAddEndPuncttrue
\mciteSetBstMidEndSepPunct{\mcitedefaultmidpunct}
{\mcitedefaultendpunct}{\mcitedefaultseppunct}\relax
\EndOfBibitem
\bibitem[Sundaram \latin{et~al.}(1996)Sundaram, \latin{et~al.} others]{sundaram1996first}
Sundaram,~R.~K.; others \emph{A First Course in Optimization Theory}; Cambridge University Press, 1996\relax
\mciteBstWouldAddEndPuncttrue
\mciteSetBstMidEndSepPunct{\mcitedefaultmidpunct}
{\mcitedefaultendpunct}{\mcitedefaultseppunct}\relax
\EndOfBibitem
\end{mcitethebibliography}

\newpage
\section{Appendix}

In order to prove Lemma \ref{lemma:lemma_conf_elps} we need some lemmas first.

\begin{lemma} \label{lemma:lemma_self_norm_mart_modif_lem8} (Lemma 8 from \citep{abbasiyadkori2011linearbandit})

Let $\lambda \in \bbR^d$ be arbitrary and consider for any $t \geq 0$

$$M_t^{\lambda} = \exp \left( \sum_{s=1}^r \left[ \frac{\eta_s \lambda^{\top}x_s}{R} - \frac{1}{2} (\lambda^{\top}x_s)^2 \right] \right)$$

Let $\tau$ be a stopping time with respect to filtration ${F_t}_{t=0}^{\infty}$. then $M_{\tau}^{\lambda}$ is almost surely well defined and

$$\bbE[M_t^{\lambda}] \leq 1$$

\end{lemma}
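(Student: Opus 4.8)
The plan is to establish that $\{M_t^\lambda\}_{t\geq 0}$ is a nonnegative supermartingale with $M_0^\lambda = 1$ adapted to $\{F_t\}$, and then to transfer the bound to the stopping time $\tau$ using the martingale convergence theorem together with Fatou's lemma (rather than a naive application of optional stopping). First I would write $M_t^\lambda = \prod_{s=1}^t D_s$ with $M_0^\lambda = 1$, where
$$D_s = \exp\left(\frac{\eta_s \lambda^\top x_s}{R} - \frac{1}{2}(\lambda^\top x_s)^2\right),$$
so that $M_t^\lambda = M_{t-1}^\lambda D_t$. Since $M_{t-1}^\lambda$ and (using the predictability convention that $x_t$ is $F_{t-1}$-measurable, as in \citet{abbasiyadkori2011linearbandit}) the factor $\exp(-\frac12(\lambda^\top x_t)^2)$ are $F_{t-1}$-measurable, the supermartingale property reduces to showing $\bbE[D_t \mid F_{t-1}] \leq 1$.

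The key step is the conditional sub-Gaussian estimate. Because $\eta_t$ is $F_{t-1}$-conditionally $R$-sub-Gaussian and $\lambda^\top x_t$ is $F_{t-1}$-measurable, I would apply the sub-Gaussian moment generating function bound with the conditionally constant scalar $c = \lambda^\top x_t / R$ to obtain
$$\bbE\left[\exp\left(\frac{\eta_t \lambda^\top x_t}{R}\right) \Big|\, F_{t-1}\right] \leq \exp\left(\frac{1}{2}(\lambda^\top x_t)^2\right).$$
Multiplying through by the $F_{t-1}$-measurable factor $\exp(-\frac12(\lambda^\top x_t)^2)$ gives $\bbE[D_t \mid F_{t-1}] \leq 1$, whence $\bbE[M_t^\lambda \mid F_{t-1}] = M_{t-1}^\lambda\,\bbE[D_t \mid F_{t-1}] \leq M_{t-1}^\lambda$. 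Taking total expectations yields $\bbE[M_t^\lambda] \leq 1$ for every fixed $t$.

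Finally I would pass to the stopping time. The stopped process $M_{t\wedge\tau}^\lambda$ is again a nonnegative supermartingale, so $\bbE[M_{t\wedge\tau}^\lambda] \leq \bbE[M_0^\lambda] = 1$ for all $t$. Being a nonnegative supermartingale, $\{M_t^\lambda\}$ converges almost surely to a finite limit $M_\infty^\lambda$ by the martingale convergence theorem; this is precisely what makes $M_\tau^\lambda$ almost surely well-defined on $\{\tau = \infty\}$, where we set $M_\tau^\lambda := M_\infty^\lambda$. Then Fatou's lemma gives $\bbE[M_\tau^\lambda] = \bbE[\liminf_t M_{t\wedge\tau}^\lambda] \leq \liminf_t \bbE[M_{t\wedge\tau}^\lambda] \leq 1$. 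The one genuinely delicate point — the main obstacle — is this passage on $\{\tau = \infty\}$: optional stopping cannot be invoked directly without a uniform integrability hypothesis, so the argument must route through the almost sure convergence of the supermartingale and Fatou's lemma, which yields the inequality $\bbE[M_\tau^\lambda] \leq 1$ rather than an equality.
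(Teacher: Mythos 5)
Your proof is correct and coincides with the argument the paper relies on: the paper states this lemma without proof, importing it as Lemma 8 of \citet{abbasiyadkori2011linearbandit}, and that cited proof is exactly your route --- factor $M_t^{\lambda}=\prod_{s\leq t} D_s$, use the conditional sub-Gaussian moment generating function bound with the $F_{t-1}$-measurable scalar $\lambda^{\top}x_t/R$ to get $\bbE[D_t\mid F_{t-1}]\leq 1$ and hence a nonnegative supermartingale, then handle $\tau$ (including $\{\tau=\infty\}$) via the nonnegative-supermartingale convergence theorem and Fatou's lemma rather than optional stopping. Your side remark that $x_t$ must be predictable ($F_{t-1}$-measurable) for the supermartingale step is also the correct reading of the conventions, and your conclusion $\bbE[M_{\tau}^{\lambda}]\leq 1$ is the statement the lemma actually intends (the displayed $\bbE[M_t^{\lambda}]\leq 1$ and the index $r$ in the paper are typos).
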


\begin{lemma} \label{lemma:lemma_self_norm_mart_modif_lem9}

Under the same assumptions of \ref{lemma:lemma_conf_elps}, for $\delta > 0$

$$\bbP\left[ \exists t > t_0 : \|S_t\|^2_{V_t^{-1}} \geq R^2\left(2\log \left( \frac{1}{\delta} \right) + \log \left(\frac{\det(V_t)}{\det(V_0)}\right) + \frac{C}{\lambda_{\min}(V_0)}\right) \right] < \delta $$

\end{lemma}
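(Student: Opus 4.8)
The plan is to establish this self-normalized tail bound by the method of mixtures (pseudo-maximization over the direction $\lambda$), adapting the argument of \citet{abbasiyadkori2011linearbandit} to the present setting in which the role of the deterministic regularizer is played by the initialization matrix $V_0=\sum_{s=1}^d x_s x_s^\top$ rather than a ridge term $\gamma I_d$. The key structural difference is that $V_0$ carries its own noise contribution $S_0=\sum_{s=1}^d\eta_s x_s$, which is random but assumed bounded ($\norm{S_0}^2<C$). Accordingly, the clean part of the argument controls only the ``fresh'' noise $\bar S_t:=S_t-S_0=\sum_{s=d+1}^t\eta_s x_s$ accumulated after the forced-exploration phase, treating $V_0$ as the regularizer, while $S_0$ is handled separately and deterministically; this separation is exactly what produces the extra additive term $C/\lambda_{\min}(V_0)$.

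First I would fix $\lambda\in\bbR^d$ and apply Lemma \ref{lemma:lemma_self_norm_mart_modif_lem8} to the shifted process, so that $M_t^\lambda=\exp\bigl(\tfrac1R\lambda^\top\bar S_t-\tfrac12\lambda^\top\bar V_t\lambda\bigr)$ with $\bar V_t:=V_t-V_0$ is a nonnegative supermartingale satisfying $\bbE[M_t^\lambda]\le 1$. Next I would mix over $\lambda$ against the Gaussian prior $h=\mathcal N(0,V_0^{-1})$ and set $M_t=\int M_t^\lambda\,h(\lambda)\,d\lambda$; by Tonelli this is again a nonnegative supermartingale with $\bbE[M_t]\le 1$. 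The mixing integral is explicit: completing the square produces the precision matrix $\bar V_t+V_0=V_t$, giving the closed form $M_t=\bigl(\det(V_0)/\det(V_t)\bigr)^{1/2}\exp\bigl(\tfrac{1}{2R^2}\norm{\bar S_t}_{V_t^{-1}}^2\bigr)$. Applying Ville's maximal inequality for nonnegative supermartingales, $\bbP[\exists t:M_t\ge 1/\delta]\le\delta$, and inverting this closed form yields, with probability at least $1-\delta$ and simultaneously for all $t>t_0$, the standard (unmodified) bound $\norm{\bar S_t}_{V_t^{-1}}^2< R^2\bigl(2\log(1/\delta)+\log(\det(V_t)/\det(V_0))\bigr)$.

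It then remains to pass from $\bar S_t$ to $S_t=\bar S_t+S_0$. Because $V_t=V_0+\bar V_t\succeq V_0$, we have $\lambda_{\min}(V_t)\ge\lambda_{\min}(V_0)>0$, so $V_t$ is invertible for $t\ge d$ and, deterministically, $\norm{S_0}_{V_t^{-1}}^2\le\norm{S_0}^2/\lambda_{\min}(V_0)<C/\lambda_{\min}(V_0)$. Combining this deterministic control of the initialization noise with the high-probability bound on $\norm{\bar S_t}_{V_t^{-1}}$ from the previous step yields the stated bound on $\norm{S_t}_{V_t^{-1}}^2$, and the complementary probability is at most $\delta$, as required.

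I expect the main obstacle to lie precisely in this last combination rather than in the mixture argument, which is classical. The decomposition $S_t=\bar S_t+S_0$ only gives $\norm{S_t}_{V_t^{-1}}\le\norm{\bar S_t}_{V_t^{-1}}+\norm{S_0}_{V_t^{-1}}$ by the triangle inequality, and squaring introduces a cross term; arriving at the clean additive threshold $R^2\bigl(2\log(1/\delta)+\log(\det(V_t)/\det(V_0))+C/\lambda_{\min}(V_0)\bigr)$ therefore requires either absorbing that cross term into the constant $C$ (harmless, since $S_0$ is bounded and $V_0$ is fixed) or a slightly more careful completion of the square that keeps $S_0$ inside the exponent of $M_t^\lambda$ from the outset. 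This handling of the random-but-bounded initialization is the genuine ``modification in the upper bound'' relative to \citet{abbasiyadkori2011linearbandit}, and is what ultimately feeds Lemma \ref{lemma:lemma_conf_elps} through the identity $\norm{\hat\theta_t-\theta}_{V_t}^2=\norm{S_t}_{V_t^{-1}}^2$.
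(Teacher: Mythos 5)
Your mixture argument is sound up to and including the closed form for $M_t$, and it runs parallel to the paper's proof with one crucial difference: you mix against the prior $N(0,V_0^{-1})$, whereas the paper mixes against $N(V_0^{-1}S_0,\,V_0^{-1})$, a Gaussian \emph{centered at the initialization noise}. That difference is where your proof breaks. With the zero-centered prior you only control $\|\bar S_t\|^2_{V_t^{-1}}$ with $\bar S_t=S_t-S_0$, and your passage to $S_t$ by the triangle inequality leaves the cross term
$$
2\|\bar S_t\|_{V_t^{-1}}\|S_0\|_{V_t^{-1}}\;\le\; 2R\sqrt{\Bigl(2\log(1/\delta)+\log\tfrac{\det V_t}{\det V_0}\Bigr)\cdot\tfrac{C}{\lambda_{\min}(V_0)}},
$$
which grows without bound in $t$ (through $\log\det V_t$, which diverges under any exploration that makes $\lambda_{\min}(V_t)\to\infty$) and also grows with $\log(1/\delta)$. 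Your claim that this can be harmlessly absorbed into the constant $C$ is therefore false: no fixed, $t$- and $\delta$-independent constant dominates it, and even a Young-type split $(a+b)^2\le(1+\eta)a^2+(1+1/\eta)b^2$ inflates the leading $\log(1/\delta)$ term by a factor $(1+\eta)$, so the stated threshold $R^2\bigl(2\log(1/\delta)+\log(\det V_t/\det V_0)+C/\lambda_{\min}(V_0)\bigr)$ with exact constants cannot be reached along this route.

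The alternative you mention only in passing --- keeping $S_0$ inside the exponent from the outset --- is the paper's actual proof, and it must be carried out rather than gestured at. With $\Lambda\sim N(V_0^{-1}S_0, V_0^{-1})$ the exponent of the mixed process becomes
$$
\lambda^{\top}\bar S_t-\tfrac12\lambda^{\top}\bar V_t\lambda-\tfrac12(\lambda-V_0^{-1}S_0)^{\top}V_0(\lambda-V_0^{-1}S_0)
=\lambda^{\top}S_t-\tfrac12\lambda^{\top}V_t\lambda-\tfrac12\|S_0\|^2_{V_0^{-1}},
$$
so completing the square in $\lambda$ and integrating gives
$$
M_t=\Bigl(\tfrac{\det V_0}{\det V_t}\Bigr)^{1/2}\exp\Bigl(\tfrac12\|S_t\|^2_{V_t^{-1}}-\tfrac12\|S_0\|^2_{V_0^{-1}}\Bigr),
$$
with no cross term at all: the full $S_t$ appears in the self-normalized quadratic, and the only price is the exact, $t$-independent correction $\tfrac12\|S_0\|^2_{V_0^{-1}}\le C/(2\lambda_{\min}(V_0))$, which is precisely the extra additive term in the lemma. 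Ville's maximal inequality applied to this $M_t$ then yields the stated bound directly. In short, the genuine gap in your proposal is the final combination step: the route you designate as primary (triangle inequality plus absorption into $C$) fails, and the shifted-prior computation you relegate to an aside is not an optional refinement but the essential mechanism of the proof.
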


\begin{proof}
    Without loss of generality, assume $R=1$. Let 

    $$
    \begin{aligned}
        \hspace{2cm}     
        V_{t_0, t} &= \sum_{s=t_0+1}^t x_s x_s^{\top} \hspace{2cm}     & S_{t_0, t} &= \sum_{s=t_0+1}^t \eta_s x_s &  \hspace{2cm} \\
    \end{aligned} 
    $$  
and
    $$
    \begin{aligned}
        M_{t}^{\lambda} &= \exp \left( \lambda^{\top} S_{t_0, t_0 + t} - \frac{1}{2} \lambda^{\top} V_{t_0, t_0 + t} \lambda \right) \\
    \end{aligned}
    $$

Note that $V_{t}=V_{t_0}+V_{t_0, t}$ and $S_{t}=S_{t_0}+S_{t_0, t}$ for $t \geq t_0$. By Lemma Lemma 20.2 in \citep{lattimore2018banditalgo}, $M_{t}^\lambda$ is a non-negative supermartingale with $\bbE[M_{0}^\lambda] = 1$. Let $\Lambda \sim N(V_0^{-1}S_0, V_0^{-1})$ which is independent of all the other random variables. Define

$$M_{t} = \bbE[M_{t}^{\Lambda} | F_{\infty}]$$

where $F_{\infty}$ is the tail $\sigma$-algebra of the filtration. We can calculate a more explicit form of $M_{t_0, t}$. Let $f(\lambda)$ be the pdf of $\Lambda$ and $c(P) = \sqrt{(2\pi)^d/\det(P)}$. Then

$$
    \begin{aligned}
        M_{t} &= \int_{\bbR^d} \exp \left( \lambda^{\top} S_{t_0, t} - \frac{1}{2} \lambda^{\top} V_{t_0, t} \lambda\right) f(\lambda) d \lambda\\
        &= \frac{1}{c(V_0)} \int_{\bbR^d} \exp \left( \lambda^{\top} S_{t} - \frac{1}{2} \lambda^{\top} V_{t} \lambda -\lambda^{\top} S_{t_0} + \frac{1}{2} \lambda^{\top} V_{t_0} \lambda \right) \exp \left( -\frac{1}{2} (\lambda - V_0^{-1}S_0)^{\top} V_0 (\lambda - V_0^{-1}S_0) \right)  d \lambda\\
        &= \frac{1}{c(V_0)} \int_{\bbR^d} \exp \left( -\frac{1}{2} \| \lambda - V_t^{-1}S_t\|^2_{V_t} + \frac{1}{2}\|S_t\|^2_{V_t^{-1}} -\frac{1}{2} \| S_0\|_{V_0^{-1}}^2 \right)\\
        &= \frac{1}{c(V_0)} \exp \left(\frac{1}{2}\|S_t\|^2_{V_t^{-1}} -\frac{1}{2} \| S_0\|_{V_0^{-1}}^2 \right) \int_{\bbR^d} \exp \left( -\frac{1}{2} \| \lambda - V_t^{-1}S_t\|^2_{V_t} \right)\\
        &= \frac{c(V_t)}{c(V_0)} \exp \left(\frac{1}{2}\|S_t\|^2_{V_t^{-1}} -\frac{1}{2} \| S_0\|_{V_0^{-1}}^2 \right)\\
        &= \left(\frac{\det(V_0)}{\det(V_t)}\right)^{1/2} \exp \left(\frac{1}{2}\|S_t\|^2_{V_t^{-1}} -\frac{1}{2} \| S_0\|_{V_0^{-1}}^2 \right)
    \end{aligned} 
$$

Note that 

$$
\begin{aligned}
    \| S_0\|_{V_0^{-1}}^2 &=  S_0^{\top} V_0^{-1} S_0\\
    &\leq \frac{1}{\lambda_{\min}(V_0)} \|S_0\|^2 \\
    &\leq \frac{C}{\lambda_{\min}(V_0)}
\end{aligned}
$$

By Lemma 20.3 in \citep{lattimore2018banditalgo} $M_t$ is a non-negative supermartingale and we can apply the maximal inequality (Theorem 3.9 in \citep{lattimore2018banditalgo} )

$$
    \begin{aligned}
        \bbP\left[ \exists t > t_0 : \|S_t\|^2_{V_t^{-1}} \geq 2 \log \left( \frac{1}{\delta} \right) + \log \left(\frac{\det(V_t)}{\det(V_0)}\right) + \frac{C}{\lambda_{\min}(V_0)} \right] &\leq \\
        \bbP\left[ \exists t > t_0 : \|S_t\|^2_{V_t^{-1}} \geq 2 \log \left( \frac{1}{\delta} \right) + \log \left(\frac{\det(V_t)}{\det(V_0)}\right) + \| S_0\|_{V_0^{-1}}^2\right] &= \bbP\left[ \sup_{ t > t_0} \log M_{t} \geq \log \left( \frac{1}{\delta} \right)\right] \\
        &= \bbP\left[ \sup_{ t > t_0} M_{t} \geq  \frac{1}{\delta} \right] \\
        &\leq\bbE[M_{0}]\delta \\
        &\leq \delta
    \end{aligned} 
$$

\end{proof}

\textbf{Lemma \ref{lemma:lemma_conf_elps}}. 
Let $\left\{F_t\right\}_{t=0}^{\infty}$ be a filtration. Let $\left\{r_t\right\}_{t=0}^{\infty}$ be a real-valued stochastic process such that $r_t$ is $F_t$-measurable, $F_{t-1}$-conditionally
$R$-sub-Gaussian for some $R>0$. Let $\left\{x_t\right\}_{t=0}^{\infty}$ be an $\bbR^d$-valued stochastic process such that $x_t$ is $F_t$-measurable. Assume that exists $L >0$ such that $\left\| x_t\right\| < L$ for all $t \geq 1$, $V_0 = \sum_{s=1}^{d} x_s x_s^{\top}$ is a positive definite matrix, i.e. $\lambda_{\min}(V_0) > 0$ and  $\|S_0\|^2< C$ with $S_0 = \sum_{s=1}^{d} \eta_s x_s$ for some $C > 0$. Then, for any $\delta > 0$, with probability at least $1-\delta$, for all $t \geq d$,

$$ 
\begin{aligned}
\left\| \hat{\theta}_t - \theta \right\|_{V_t} &\leq R \sqrt{2 \log \left( \frac{1}{\delta} \right) + d\log \left(\frac{tL^2}{\lambda_{\min}(V_0)} \right) + \frac{C}{\lambda_{\min}(V_0)}} 
\end{aligned}
$$

And then for all $x \in \cX$

$$ 
\begin{aligned}
|x^{\top}\theta - x^{\top}\hat{\theta}_t| &\leq \left\| x \right\|_{V_t^{-1}} R \sqrt{2 \log \left( \frac{1}{\delta} \right) + d\log \left(\frac{tL^2}{\lambda_{\min}(V_0)} \right) + \frac{C}{\lambda_{\min}(V_0)}}
\end{aligned}
$$

\begin{proof}

First note that 

$$
\begin{aligned}
\text{trace}(V_t) &= \sum_{s=1}^t \text{trace}(x_s x_s^{\top}) \\
&= \sum_{s=1}^t \|x_s\|^2\\
&\leq t L^2 
\end{aligned}
$$

Then

$$
\begin{aligned}
\det (V_t) &\leq (\text{trace}(V_t)/d )^d \\
&\leq (t L^2 / d)^d
\end{aligned}
$$

On the other hand

$$
\det (V_0) \geq (\lambda_{\min} (V_0))^d
$$

Thus

$$\log \left(\frac{\det(V_t)}{\det(V_0)}\right) \leq d \log \left(\frac{t L^2}{d\lambda_{\min} (V_0)}\right)$$

Let $x \in \cX$, then using Lemma \ref{lemma:lemma_self_norm_mart_modif_lem9}, we have that with probability at least $1 - \delta$

    $$ 
    \begin{aligned}
    |x^{\top}\theta - x^{\top}\hat{\theta}_t| &\leq |x^{\top} V_t^{-1}S_t|  \\ 
    &\leq \|x\|_{V_{t}^{-1}} \|S_t\|_{V_{t}^{-1}} \\
    &=\left\| x \right\|_{V_t^{-1}} R \sqrt{2 \log \left( \frac{1}{\delta} \right) + d\log \left(\frac{tL^2}{\lambda_{\min}(V_0)} \right) + \frac{C}{\lambda_{\min}(V_0)}}
    \end{aligned}
    $$

In particular for $x = V_t (\theta - \hat{\theta}_t)$

    $$ 
    \begin{aligned}
    |x^{\top}\theta - x^{\top}\hat{\theta}_t|
    &= |(\theta - \hat{\theta}_t)^{\top} V_t (\theta - \hat{\theta}_t)| \\
    &= \left\|\theta - \hat{\theta}_t \right\|_{V_t}^2  \\
    &\leq \left\| V_t (\theta - \hat{\theta}_t) \right\|_{V_t^{-1}} R \sqrt{2 \log \left( \frac{1}{\delta} \right) + d\log \left(\frac{tL^2}{\lambda_{\min}(V_0)} \right) + \frac{C}{\lambda_{\min}(V_0)}}\\
    &= \left\|\theta - \hat{\theta}_t \right\|_{V_t} R \sqrt{2 \log \left( \frac{1}{\delta} \right) + d\log \left(\frac{tL^2}{\lambda_{\min}(V_0)} \right) + \frac{C}{\lambda_{\min}(V_0)}}
    \end{aligned}
    $$

Then

    $$ 
    \begin{aligned}
    \left\|\theta - \hat{\theta}_t \right\|_{V_t}
    &\leq R \sqrt{2 \log \left( \frac{1}{\delta} \right) + d\log \left(\frac{tL^2}{\lambda_{\min}(V_0)} \right) + \frac{C}{\lambda_{\min}(V_0)}}
    \end{aligned}
    $$
    
\end{proof}

\textbf{Lemma \ref{lemma:lemma_contpsi}}. 

We have:
$$
\psi(\theta, \lambda)= 
\begin{cases}
\min _{a \in \cA_{\rho, \epsilon}} \frac{\|x_a\|_{A_\lambda^{-1}}^2}{\Delta(x_a)^2} & \text { if } \sum_{a \in \mathcal{A}} \lambda_a a a^{\top} \succ 0, \\ 
0 & \text { otherwise. }
\end{cases}
$$
In addition, $\psi$ is continuous in both $\theta$ and $\lambda$, and $\lambda \mapsto \psi(\theta, \lambda)$ attains its maximum in $\Lambda$ at a point $\lambda_\theta^{\star}$ such that $\sum_{a \in \mathcal{A}}\left(\lambda_\theta^{\star}\right)_a x_a x_a^{\top}$ is invertible.

\begin{proof}

Let $\theta \in \mathbb{R}^d$ such that $x_a^{\top}\theta \neq \rho$  $\forall a \in \mathcal{A}$ and $\lambda \in \Lambda$. Consider the set of bad parameters with respect to $\theta$, $B(\theta) \subset \mathbb{R}^d$

$$
B(\theta)=\left\{\theta' \in \mathbb{R}^d: \Pi_{\rho + \epsilon}(\theta)  \nsubseteq \Pi_{\rho - \epsilon}(\theta') \lor \Pi_{\rho + \epsilon}(\theta')  \nsubseteq \Pi_{\rho - \epsilon}(\theta) \right\}
$$

and denote
$$
f(\theta, \theta', \lambda)=\frac{1}{2}(\theta-\theta')^{\top}\left(\sum_{a \in \mathcal{A}} \lambda_a x_a x_a^{\top}\right)(\theta-\theta') .
$$
Let $\left(\theta_t, \lambda_t\right)_{t \geq 1}$ be a sequence taking values in $\mathbb{R}^d \times \Lambda$ and converging to $(\theta, \lambda)$. Let $\xi<1 \wedge \min _{a \in \cA_{\rho, \epsilon}} \frac{|x_a^{\top}\theta-\rho|-\epsilon}{2\left\|x_a\right\|}$, and let $t_1 \geq 1$ such that for all $t \geq t_1$ we have $\left\|\left(\theta_t, \lambda_t\right)-(\theta, \lambda)\right\|<\xi$. Now, we can verify that by our choice of $\xi$ it holds that $B\left(\theta_t\right)=B(\theta)$. Let $\theta' \in B(\theta)$, then $\Pi_{\rho + \epsilon}(\theta)  \nsubseteq \Pi_{\rho - \epsilon}(\theta')$ (the case $\Pi_{\rho + \epsilon}(\theta')  \nsubseteq \Pi_{\rho - \epsilon}(\theta)$ is analogous). Then exists $a \in \Pi_{\rho + \epsilon}(\theta)$ such that $x_a^{\top} \theta > \rho + \epsilon$ and $x_a^{\top} \theta' < \rho - \epsilon$. Note that

$$
\begin{aligned}
    x_a^{\top} \theta - x_a^{\top} \theta_t &< \left\| x_a\right\| \left\| \theta - \theta_t \right\| \\ 
    &\leq |x_a^{\top}\theta - \rho| - \epsilon \\
    &= x_a^{\top}\theta - \rho - \epsilon
\end{aligned}
$$

Then $x_a^{\top} \theta_t > \rho + \epsilon  \implies \theta' \in B(\theta_t)$ and $B(\theta) \subseteq B(\theta_t)$. Conversely, let $\theta' \in B(\theta_t)$, following similar ideas

$$
\begin{aligned}
    |x_a^{\top} \theta_t - x_a^{\top} \theta| &< \left\| x_a\right\| \left\| \theta_t - \theta \right\| \\ 
    &\leq \frac{1}{2} ( |x_a^{\top}\theta - \rho| - \epsilon) \\
    &\leq \frac{1}{2} ( |x_a^{\top}\theta- x_a^{\top}\theta_t| + |x_a^{\top}\theta_t- \rho| - \epsilon) \\
\end{aligned}
$$

then

$$
\begin{aligned}
    \frac{1}{2}(x_a^{\top} \theta_t - x_a^{\top} \theta) &\leq \frac{1}{2}|x_a^{\top}\theta_t- x_a^{\top}\theta|\\ 
    &< \frac{1}{2} ( |x_a^{\top}\theta_t- \rho| - \epsilon) \\
    &= \frac{1}{2} ( x_a^{\top}\theta_t- \rho - \epsilon)
\end{aligned}
$$

and 

$$
\begin{aligned}
   x_a^{\top} \theta_t - x_a^{\top} \theta < x_a^{\top}\theta_t- \rho - \epsilon
\end{aligned}
$$

Thus $x_a^{\top} \theta > \rho + \epsilon  \implies \theta' \in B(\theta)$ and $B(\theta) = B(\theta_t)$.
Furthermore, note that $f(\theta, \theta', \lambda)$ is a polynomial in $\theta, \theta', \lambda$, thus it is in in particular continuous in $\theta, \lambda$, and there exists $t_2 \geq 1$ such that for all $t \geq t_2$ and for all $\theta' \in \mathbb{R}^d$, it holds that $\left|f\left(\theta_t, \theta', \lambda_t\right)-f\left(\theta, \theta', \lambda_t\right)\right| \leq \xi f\left(\theta, \theta', \lambda\right)$. Hence, with our choice of $\xi$, we have for all $t \geq t_1 \vee t_2$
$$
\begin{aligned}
\left|\psi(\theta, \lambda)-\psi\left(\theta_t, \lambda_t\right)\right| & =\left|\min _{\theta' \in B(\theta)} f(\theta, \theta', \lambda)-\min _{\theta' \in B(\theta_t)} f\left(\theta_t, \theta', \lambda_t\right)\right| \\
& \leq \xi\left|\min _{\theta' \in B(\theta)} f(\theta, \theta', \lambda)\right| \\
& \leq \xi|\psi(\theta, \lambda)| .
\end{aligned}
$$

Thus $\psi$ is continuos in $\theta, \lambda$.
Now, we know that $\lambda \mapsto \psi(\theta, \lambda)$ is continuous on $\Lambda$, and by compactness of the simplex, the maximum is attained at some $\lambda_\theta^{\star} \in \Lambda$. Furthermore, since $\mathcal{X}$ spans $\mathbb{R}^d$, we may construct an allocation $\tilde{\lambda}$ such that $\sum_{a \in \mathcal{A}} \tilde{\lambda}_a x_a x_a^{\top}$ is a positive definite matrix. In addition, by construction of $B(\theta)$, there exists some $M>0$ such that for all $\theta' \in B(\theta)$ we have $\|\theta-\theta'\|>M$, which implies that $\psi(\theta, \tilde{\lambda}) \geq M^2 \lambda_{\min }\left(\sum_{a \in \mathcal{A}} \tilde{\lambda}_a x_a x_a^{\top}\right)>0$. On the other hand, for any allocation $\lambda \in \Lambda$ such that $\sum_{a \in \mathcal{A}} \lambda_a x_a x_a^{\top}$ is rank deficient, we may find a $\theta' \in B(\theta)$ where $\theta'-\theta$ is in the null space of $\sum_{a \in \mathcal{A}} \lambda_a x_a x_a^{\top}$. Therefore, $\sum_{a \in \mathcal{A}}\left(\lambda_\theta^{\star}\right)_a x_a x_a^{\top}$ is invertible.

\end{proof}

\textbf{Lemma \ref{lemma:lemma_maxthm}}. 
(Maximum theorem) Let $\theta \in \mathbb{R}^d$. Define $\psi^*(\theta)=\max _{\lambda \in \Lambda} \psi(\theta, \lambda)$ and $C^{\star}(\theta)=\arg \max _{w \in \Lambda} \psi(\theta, \lambda)$. Then $\psi^{\star}$ is continuous at $\theta$, and $C^{\star}(\theta)$ is convex, compact and non-empty. Furthermore, we have for any open neighborhood $\mathcal{V}$ of $C^{\star}(\theta)$, there exists an open neighborhood $\mathcal{U}$ of $\theta$, such that for all $\theta^{\prime} \in \mathcal{U}$, we have $C^{\star}\left(\theta^{\prime}\right) \subseteq \mathcal{V}$.

\begin{proof}
    The lemma is a direct consequence of the maximum theorem (a.k.a. Berge's theorem) \citep{sundaram1996first} and only requires that $\psi$ is continuous in $(\theta, \lambda) \in \mathbb{R}^d \times \Lambda$, that $\Lambda$ is compact, convex and non-empty, and that $\psi$ is concave in $\lambda$ for each $\theta^{\prime} \in \mathbb{R}^d$ in an open neighborhood of $\theta$. These requirements hold naturally in our setting: (i) by Lemma \ref{lemma:lemma_contpsi}, we have for all $\theta \in \mathbb{R}^d$ and for any $\lambda \in \Lambda, \psi$ is continuous in ( $\left.\theta, \lambda\right)$; (ii) $\Lambda$ is a non-empty, compact and convex set; (iii) for all $\theta \in \mathbb{R}^d, \lambda \mapsto \psi(\theta, \lambda)$ is concave as it can be expressed as the infimum of linear functions in $\lambda$. Therefore, the maximum theorem applies and we obtain the desired results.
    
\end{proof}

\textbf{Lemma \ref{lemma:lemm_decorr}}. 

Under any sampling rule, we have

$$\mathbb{P}[\tau < \infty, \Pi_{\rho + \epsilon}(\theta) \nsubseteq \Pi_{\rho}(\widehat{\theta}) \lor \Pi_{\rho - \epsilon}(\theta)^{\mathsf{c}} \nsubseteq \Pi_{\rho}(\widehat{\theta})^{\mathsf{c}}] \leq \delta$$

\begin{proof}

Lets consider 

$$\mathcal{E}_1 = \{ \tau < \infty \} = \{ \exists t \geq 1 : Z(t) > \beta(\delta, t) \land  \sum_{s =1}^t x_s x_s^{\top}  \succeq c I_d \} $$

$$\mathcal{E}_2 = \{ \Pi_{\rho + \epsilon}(\theta) \nsubseteq \Pi_{\rho}(\widehat{\theta})  \} = \{\exists \tilde{a} \in \Pi_{\rho + \epsilon}(\theta): x_{\tilde{a}}^{\top} \widehat{\theta} < \rho \}$$

$$\mathcal{E}_3 = \{ \Pi_{\rho - \epsilon}(\theta)^{\mathsf{c}} \nsubseteq \Pi_{\rho}(\widehat{\theta})^{\mathsf{c}}  \} = \{\exists \tilde{a} \in \Pi_{\rho - \epsilon}(\theta)^{\mathsf{c}}: x_{\tilde{a}}^{\top} \widehat{\theta} > \rho \}$$

$$
\begin{aligned}
\mathcal{E}_1 \cap \mathcal{E}_2 & =\left\{  \exists t \geq 1 : Z(t) > \beta(\delta, t) \land  \sum_{s =1}^t x_s x_s^{\top}  \succeq c I_d \land \exists \tilde{a} \in \Pi_{\rho + \epsilon}(\theta): x_{\tilde{a}}^{\top} \widehat{\theta} < \rho \right\} \\
& = \left\{  \exists t \geq 1 : \min _{a \in \mathcal{A}} \frac{(|x_a^{\top}\widehat{\theta} - \rho| + \epsilon)^2}{2x_a^{\top}\left(\sum_{s = 1}^t x_s x_s^{\top}\right)^{-1}x_a } > \beta(\delta, t) \land  \sum_{s =1}^t x_s x_s^{\top}  \succeq c I_d \land \exists \tilde{a} \in \Pi_{\rho + \epsilon}(\theta): x_{\tilde{a}}^{\top} \widehat{\theta} < \rho \right\} \\
& \subseteq \left\{  \exists t \geq 1, \exists \tilde{a} \in \Pi_{\rho + \epsilon}(\theta): \frac{(|x_{\tilde{a}}^{\top}\widehat{\theta} - \rho| + \epsilon)^2}{2x_{\tilde{a}}^{\top}\left(\sum_{s = 1}^t x_s x_s^{\top}\right)^{-1}x_{\tilde{a}} } > \beta(\delta, t) \land  \sum_{s =1}^t x_s x_s^{\top}  \succeq c I_d \land x_a^{\top} \widehat{\theta} < \rho \right\}.\\
& \subseteq \left\{ \exists t \geq 1, \exists \tilde{a} \in \Pi_{\rho + \epsilon}(\theta): \frac{\rho - x_{\tilde{a}}^{\top}\widehat{\theta} + \epsilon}{2} > ||x_{\tilde{a}}||_{A_t^{-1}} \sqrt{\beta(\delta, t)} \land  \sum_{s =1}^t x_s x_s^{\top}  \succeq c I_d \land x_{\tilde{a}}^{\top} \widehat{\theta} < \rho \right\} \\
& \subseteq \left\{ \exists t \geq 1, \exists \tilde{a} \in \Pi_{\rho + \epsilon}(\theta): x_{\tilde{a}}^{\top}\theta - x_{\tilde{a}}^{\top}\widehat{\theta} + \rho - x_{\tilde{a}}^{\top}\theta > ||x_{\tilde{a}}||_{A_t^{-1}} \sqrt{\beta(\delta, t)} \land  \sum_{s =1}^t x_s x_s^{\top}  \succeq c I_d \land x_a^{\top} \widehat{\theta} < \rho \right\} \\
& \subseteq \left\{ \exists t \geq 1, \exists \tilde{a} \in \Pi_{\rho + \epsilon}(\theta): |x_{\tilde{a}}^{\top}\theta - x_{\tilde{a}}^{\top}\widehat{\theta}| > ||x_{\tilde{a}}||_{A_t^{-1}} \sqrt{\beta(\delta, t)} \land  \sum_{s =1}^t x_s x_s^{\top}  \succeq c I_d \right\} \\
& \subseteq \left\{ \exists t \geq 1, \exists \tilde{a} \in \Pi_{\rho + \epsilon}(\theta): |x_a^{\top}\theta - x_a^{\top}\widehat{\theta}| > ||x_a||_{A_t^{-1}} \sqrt{\beta(\delta, t)} \land  \sum_{s =1}^t x_s x_s^{\top}  \succeq c I_d \right\}
.
\end{aligned}
$$

Using similar arguments

$$
\begin{aligned}
\mathcal{E}_1 \cap \mathcal{E}_3 & \subseteq \left\{ \exists t \geq 1, \exists \tilde{a} \in \Pi_{\rho - \epsilon}(\theta)^{\mathsf{c}}: |x_a^{\top}\theta - x_a^{\top}\widehat{\theta}| > ||x_a||_{A_t^{-1}} \sqrt{\beta(\delta, t)} \land  \sum_{s =1}^t x_s x_s^{\top}  \succeq c I_d \right\}
.
\end{aligned}
$$

Then using Lemma \ref{lemma:lemma_conf_elps}

$$
\begin{aligned}
\mathbb{P}[\tau < \infty, \Pi_{\rho + \epsilon}(\theta) \nsubseteq \Pi_{\rho}(\widehat{\theta}) \lor \Pi_{\rho - \epsilon}(\theta)^{\mathsf{c}} \nsubseteq \Pi_{\rho}(\widehat{\theta})^{\mathsf{c}}] &= \mathbb{P}[\mathcal{E}_1 \cap \left(\mathcal{E}_2 \cup \mathcal{E}_3\right) ] \\
&<\delta
\end{aligned}
$$
\end{proof}

Similarly than \citep{jedra2020optbailinear}, a sufficient condition to $\delta$-$\epsilon$-correctness is $\bbP[\tau<\infty]=1$, in that case we have 

$$
\mathbb{P}[\Pi_{\rho + \epsilon}(\theta) \subseteq \Pi_{\rho}(\widehat{\theta}) \land \Pi_{\rho - \epsilon}(\theta)^{\mathsf{c}} \subseteq \Pi_{\rho}(\widehat{\theta})^{\mathsf{c}}] \geq 1-\delta
$$

\textbf{Theorem \ref{thm:thm_ubsce}}

Lazy Track-Threshold-and-Stop satisfies the same sample complexity upper bound

$$ \mathbb{P}[\limsup _{\delta \rightarrow 0} \frac{\tau}{\log \left(\frac{1}{\delta}\right)} \lesssim \sigma^2 T_\theta^{\star}] = 1$$

\begin{proof}

From Lemma \ref{lemma:lemma_contpsi} and Lemma \ref{lemma:lemma_maxthm} we know $\psi(\theta, \lambda)$ is continuous in both $\theta$ and $ \lambda$ and $C^{\star}(\theta)$ is continuos in $\theta$. Note that

$$\mathcal{E} = \left\{ d_{\infty}((N_a(t)/t)_{a \in \mathcal{A}}, C^{\star}(\theta)) \rightarrow 0 \land \widehat{\theta}_t \rightarrow  \theta \right\}$$

holds with probability 1 (Lemma 3, 5 and Proposition 1 in \citep{jedra2020optbailinear}). Let $\xi >0 $, By continuity of $\psi$, there exists an open neighborhood $\mathcal{V}(\xi)$ of $\left\{\theta\right\} \times C^{\star}(\theta)$ such that for all $(\theta', \lambda') \in \mathcal{V}(\xi)$, it holds that

$$\psi(\theta', \lambda') \geq (1-\xi)\psi(\theta, \lambda^{\star})$$

for any $\lambda^{\star} \in C^{\star}(\theta)$. Under $\mathcal{E}$, there exists $t_0 \geq 1$ such that for all $t \geq t_0$ it holds that $(\widehat{\theta}_t, (N_a(t)/t)_{a \in \mathcal{A}}) \in \mathcal{V}(\xi)$, this for all $t \geq t_0$, it follows that

$$\psi(\widehat{\theta}_t, (N_a(t)/t)_{a \in \mathcal{A}}) \geq (1-\xi)\psi(\theta, \lambda^{\star})$$

By Lemma 5 in \citep{jedra2020optbailinear}, there exists $t_1 \geq 1$ such that for all $t \geq t_1$ we have 

$$\sum_{s=1}^t x_s x_s^{\top} \succ cI_d$$

Then we can write
$$
\begin{aligned}
 Z(t) &= \min _{a \in \mathcal{A}} \frac{(|x_a^{\top}\widehat{\theta }_t- \rho| + \epsilon)^2}{2 x_a^{\top}\left(\sum_{s = 1}^t x_s x_s^{\top}\right)^{-1}x_a } \\ 
 &= t\min _{a \in \mathcal{A}} \frac{(|x_a^{\top}\widehat{\theta }_t- \rho| + \epsilon)^2}{2||x_a||_{A_{\lambda_t}^{-1}}} \\ 
 &= t\psi(\widehat{\theta}_t, (N_a(t)/t)_{a \in \mathcal{A}})
\end{aligned}
$$

Hence, under $\mathcal{E}$ and for $t \geq \max \{t_0, t_1 \}$

$$Z(t) \geq t (1-\xi)\psi(\theta, \lambda^{\star})$$

Then

$$
\begin{aligned}
\tau & =\inf \left\{ t \geq 1 : Z(t) > \beta(\delta, t) \land  \sum_{s =1}^t x_s x_s^{\top}  \succeq c I_d \right\} \\
& \leq \max \{t_0, t_1 \} \lor \inf \left\{ t \geq 1 : t (1-\xi)\psi(\theta, \lambda^{\star}) > \beta(\delta, t) \right\} \\
& \leq \max \{t_0, t_1 \} \lor \inf \left\{ t \geq 1 : t (1-\xi)\psi(\theta, \lambda^{\star}) > \sigma^2 \left(2 \log \left( \frac{1}{\delta} \right) + d\log \left(\frac{tL^2}{d\lambda_{\min}(V_0)} \right) + \frac{C}{\lambda_{\min}(V_0)}\right) \right\} \\
& \lesssim \max \left\{t_0, t_1, \frac{1}{1-\xi}T^{*}_{\theta} \log\left(\frac{1}{\delta}\right)\right\}
\end{aligned}
$$

Where the last inequality uses Leamma 8 in \citep{jedra2020optbailinear}. Thus

$$\mathbb{P}\left(\lim \sup _{\delta \rightarrow 0} \frac{\tau_{\delta}}{\log(\frac{1}{\delta})} \leqslant  T^{*}_{\theta} \right) = 1$$

\end{proof}

\textbf{Theorem \ref{thm:thm_ubsce_exp}}

Lazy Track-Threshold-and-Stop satisfies the same sample complexity upper bound

$$\limsup _{\delta \rightarrow 0} \frac{\mathbb{E}[\tau]}{\log \left(\frac{1}{\delta}\right)} \lesssim \sigma^2 T_\theta^{\star}$$

Because the algorithms have the same nature, we will follow the proof in \citep{jedra2020optbailinear} closely. We include it here for completeness.
\begin{proof} Let $\varepsilon > 0$

\textbf{ Step 1.} By continuity of $\psi$ (see Lemma \ref{lemma:lemma_contpsi}), there exists $\xi_1(\varepsilon)>0$ such that for all $\theta^{\prime} \in \mathbb{R}^d$ and $\lambda^{\prime} \in \Lambda$

\begin{align}\label{eq:cont_inequality}
    \left\{\begin{array}{ll}
\left\|\theta^{\prime}-\theta\right\| & \leq \xi_1(\varepsilon) \\
d_{\infty}\left(\lambda^{\prime}, C^{\star}(\theta)\right) & \leq \xi_1(\varepsilon)
\end{array} \Longrightarrow\left|\psi\left(\theta, \lambda^{\star}\right)-\psi\left(\theta^{\prime}, \lambda^{\prime}\right)\right| \leq \varepsilon \psi\left(\theta, \lambda^{\star}\right)=\varepsilon\left(T_\theta^{\star}\right)^{-1}\right.
\end{align}

for any $\lambda^{\star} \in \arg \min _{\lambda \in C^*(\theta)} d_{\infty}\left(\lambda^{\prime}, \lambda\right)$ (we have $\lambda^{\star} \in C^{\star}(\theta)$ ). Furthermore, by the continuity properties of the correspondance $C^{\star}$ (see Lemma \ref{lemma:lemma_maxthm}), there exists $\xi_2(\varepsilon)>0$ such that for all $\theta^{\prime} \in \mathbb{R}^d$
$$
\left\|\theta-\theta^{\prime}\right\| \leq \xi_2(\varepsilon) \Longrightarrow \max _{\lambda^{\prime \prime} \in C^{\star}\left(\theta^{\prime}\right)} d_{\infty}\left(\lambda^{\prime \prime}, C^{\star}(\theta)\right)<\frac{\xi_1(\varepsilon)}{2(K-1)}
$$

Let $\xi(\varepsilon)=\min \left(\xi_1(\varepsilon), \xi_2(\varepsilon)\right)$. In the following, we construct $T_0$, and for each $T \geq T_0$ an event $\mathcal{E}_T$, under which for all $t \geq T$, it holds
$$
\left\|\theta-\hat{\theta}_t\right\| \leq \xi(\varepsilon) \Longrightarrow d_{\infty}\left(\left(N_a(t) / t\right)_{a \in \mathcal{A}}, C^{\star}(\theta)\right) \leq \xi_1(\varepsilon)
$$

Let $T \geq 1$, and define the following events
$$
\begin{aligned}
\mathcal{E}_{1, T} & =\bigcap_{t=\ell(T)}^{\infty}\left\{\left\|\theta-\hat{\theta}_t\right\| \leq \xi(\varepsilon)\right\} \\
\mathcal{E}_{2, T} & =\bigcap_{t=T}^{\infty}\left\{\inf _{s \geq \ell(t)} d_{\infty}\left(\lambda(t), C^{\star}\left(\hat{\theta}_s\right)\right) \leq \frac{\xi_1(\varepsilon)}{4(K-1)}\right\} \\
& \subseteq \bigcap_{t=T}^{\infty}\left\{\exists s \geq \ell(t): d_{\infty}\left(\lambda(t), C^{\star}\left(\hat{\theta}_s\right)\right) \leq \frac{\xi_1(\varepsilon)}{2(K-1)}\right\} .
\end{aligned}
$$

Note that, under the event $\mathcal{E}_{1, T} \cap \mathcal{E}_{2, T}$, we have for all $t \geq T$, there exists $s \geq \ell(t)$ such that
$$
\begin{aligned}
d_{\infty}\left(\lambda(t), C^{\star}(\theta)\right) & \leq d_{\infty}\left(\lambda(t), C^{\star}\left(\hat{\theta}_s\right)\right)+\max _{\lambda^{\prime} \in C^{\star}\left(\hat{\theta}_s\right)} d_{\infty}\left(\lambda^{\prime}, C^{\star}(\theta)\right) \\
& <\frac{\xi_1(\varepsilon)}{2(K-1)}+\frac{\xi_1(\varepsilon)}{2(K-1)}=\frac{\xi_1(\varepsilon)}{K-1}
\end{aligned}
$$

Define $\varepsilon_1=\xi_1(\varepsilon) /(K-1)$. By Lemma \ref{lemma:tracking} (6 of \cite{jedra2020optbailinear}) , there exists $t_1\left(\varepsilon_1\right) \geq T$ such that
$$
d_{\infty}\left(\left(N_a(t) / t\right)_{a \in \mathcal{A}}, C^{\star}(\mu)\right) \leq\left(p_t+d-1\right) \frac{\xi_1(\varepsilon)}{K-1} \leq \xi_1(\varepsilon)
$$
where $p_t=\left|\operatorname{supp}\left(\sum_{s=1}^t \lambda(s)\right) \backslash \mathcal{A}_0\right|$ and more precisely $t_1\left(\varepsilon_1\right)=\max \left\{1 / \varepsilon_1^3, 1 /\left(\varepsilon_1^2 d\right), T / \varepsilon_1^3, 10 / \varepsilon_1\right\}$ (see the proof of Lemma 6 of \cite{jedra2020optbailinear}). Thus for $T \geq \max \left\{10 \varepsilon_1^2, \varepsilon_1 / d, 1\right\}$, we have $t_1\left(\varepsilon_1\right)=\left\lceil T / \varepsilon_1^3\right\rceil$. Hence, defining for all $T \geq \varepsilon_1^{-3}$, the event
$$
\mathcal{E}_T=\mathcal{E}_{1,\left\lceil\varepsilon_1^3 T\right\rceil} \cap \mathcal{E}_{2,\left\lceil\varepsilon_1^3 T\right\rceil}
$$

we have shown that for all $T \geq T_0=\max \left(10 \varepsilon_1^5, \varepsilon_1^4 / d, \varepsilon_1^3, 1 / \varepsilon_1^3\right)$, the following holds

\begin{align} \label{eq:thm2_2}
\forall t \geq T, \quad\left\|\theta-\theta_t\right\| \leq \xi(\varepsilon) \Longrightarrow d_{\infty}\left(\left(N_a(t) / t\right)_{a \in \mathcal{A}}, C^*(\theta)\right) \leq \xi_1(\varepsilon) .    
\end{align}

Finally, combining the implication \ref{eq:thm2_2} with the fact that \ref{eq:cont_inequality} holds under $\mathcal{E}_T$ we conclude that for all $T \geq T_0$, under $\varepsilon_T$ we have

\begin{align}\label{eq:thm_3} \psi\left(\hat{\theta}_t,\left(N_a(t) / t\right)_{a \in \mathcal{A}}\right) \geq(1-\varepsilon) \psi^*(\theta)    
\end{align}

\textbf{Step 2.}: Let $T \geq T_0 \vee T_1$ where $T_1$ is defined as
$$
T_1=\inf \left\{t \in \mathbb{N}^*: \lambda_{\min }\left(\sum_{s=1}^t x_s x_s^{\top}\right) \succeq c I_d\right\} .
$$
where we recall that $c$ is the constant chosen in the stopping rule and is independent of $\delta$. For all $t \geq T_1$ we have

$$
Z(t)=t \psi\left(\hat{\theta}_i,\left(N_a(t) / t\right)_{a \in \mathcal{A}}\right) \text {, }
$$

Thus under the event $\varepsilon_T$, the inequality \ref{eq:thm_3} holds, and for all $t \geq T$ we have
$$
Z(t)>t(1-\varepsilon)\left(T_\theta^*\right)^{-1}
$$

Under the event $\varepsilon_T$, we have
$$
\begin{aligned}
\tau & =\inf \left\{t \in \mathbb{N}^*: Z(t)>\beta(\delta, t) \text { and } \sum_{*=1}^t a_s a_s^{\top} \succeq c I_1\right\} \\
& \leq \inf \{t \geq T: Z(t)>\beta(\delta, t)\} \\
& \leq T \vee \inf \left\{t \in \mathbb{N}^*: t(1-\varepsilon)\left(T_\theta^*\right)^{-1} \geq \beta(\delta, t)\right\} \\
& \leq T \vee \inf \left\{t \in \mathbb{N}^*: t(1-\varepsilon)\left(T_\theta^*\right)^{-1} \geq \sigma^2 \left(2 \log \left( \frac{1}{\delta} \right) + d\log \left(\frac{tL^2}{d\lambda_{\min}(V_0)} \right) + \frac{C}{\lambda_{\min}(V_0)}\right)\right\}
\end{aligned}
$$
Applying Lemma 8 in \citep{jedra2020optbailinear} yields

$$
\inf \left\{t \in \mathbb{N}^*: t(1-\varepsilon)\left(T_\theta^*\right)^{-1} \geq \sigma^2 \left(2 \log \left( \frac{1}{\delta} \right) + d\log \left(\frac{tL^2}{d\lambda_{\min}(V_0)} \right) + \frac{C}{\lambda_{\min}(V_0)}\right)\right\} \leq T_2^*(\delta) \text {, }
$$

where $T_2^*(\delta)=\frac{c_1}{1-\varepsilon} T_\theta^* \log (1 / \delta)+o(\log (1 / \delta))$ for some $0< c_1 \geq \sigma^2$ independent of $\delta$. This means for $T \geq \max \left\{T_0, T_1, T_2^*(\delta)\right\}$, we have shown that

\begin{align}\label{eq:thm_4}
    \mathcal{E}_T \subseteq\{\tau \leq T\}    
\end{align}

Define $T_3^{\star}(\delta)=\max \left\{T_0, T_1, T_2^*(\delta)\right\}$. We may then write for all $T \geq T_3^{\star}(\delta)$
$$
\tau \leq \tau \wedge T_3^*(\delta)+\tau \vee T_3^*(\delta) \leq T_3^*(\delta)+\tau \vee T_3^*(\delta) .
$$

Taking the expectation of the above inequality, and using the set inclusion \ref{eq:thm_4}, we obtain that
$$
\mathbb{E}[\tau] \leq T_3^*(\delta)+\mathbb{E}\left[\tau \vee T_3^*(\delta)\right]
$$

Now we observe that
$$
\begin{aligned}
\mathbb{E}\left[\tau \vee T_3^*(\delta)\right] & =\sum_{T=0}^{\infty} \mathbb{P}\left(\tau \vee T_3^*(\delta)>T\right) \\
& =\sum_{T=T_3^*(\delta)+1}^{\infty} \mathbb{P}\left(\tau \vee T_3^*(\delta)>T\right) \\
& =\sum_{T=T_3^*(\delta)+1}^{\infty} \mathbb{P}(\tau>T) \\
& \leq \sum_{T=T_3^*(\delta)+1}^{\infty} \mathbb{P}\left(\cE_T^{\mathsf{c}}\right) \\
& \leq \sum_{T=T_0 \vee T_1}^{\infty} \mathbb{P}\left(\cE_T^{\mathsf{c}}\right)
\end{aligned}
$$

We have thus shown that

\begin{align} \label{eq:thm3_5}
    \mathbb{E}[\tau] \leq \frac{c_1}{1-\varepsilon} T_\theta^* \log (1 / \delta)+o(\log (1 / \delta))+T_0 \vee T_1+\sum_{T=T_{\mathrm{h}} \vee T_1}^{\infty} \mathbb{P}\left(\mathcal{E}_T^{\mathsf{c}}\right) .    
\end{align}

\textbf{Step 3}: We now show that $\sum_{T=T_0 \vee T_1+1}^{\infty} \mathbb{P}\left(\mathcal{E}_T^c\right)<\infty$ and that it can be upper bounded by a constant independent of $\delta$. To ensure this, we shall see that there is a minimal rate by which the sequence $(\ell(t))_{t \geq \infty}$ must grow. Let $T \geq T_0 \vee T_1$, we have by the union bound
$$
\mathbb{P}\left(\mathcal{E}_T^{\mathsf{c}}\right) \leq \mathbb{P}\left(\mathcal{E}_{1,\left\lceil \varepsilon_1^3 T\right\rceil}^{\mathsf{c}}\right)+\mathbb{P}\left(\mathcal{E}_{2,\left\lceil\varepsilon_1^3 T\right\rceil}^{\mathsf{c}}\right) .
$$

First, using a union bound and the lazy condition \ref{eq:lazy_cond_2}, we observe that there exists $h\left(\frac{\varepsilon_1(c)}{4(K-1)}\right)>0$ and $\alpha>0$ such that
$$
\begin{aligned}
\mathbb{P}\left(\mathcal{E}_{1,\left\lceil\varepsilon_1^3 T\right\rceil}^{\mathsf{c}}\right) & \leq \sum_{t=\left\lceil\varepsilon_1^3 T\right\rceil}^{\infty} \mathbb{P}\left(\inf _{s \geq \ell(t)} d_{\infty}\left(\lambda(t), C^{\star}\left(\hat{\theta}_s\right)\right)>\frac{\xi_1(\varepsilon)}{4(K-1)}\right) \\
& \leq h\left(\frac{\xi_1(\varepsilon)}{4(K-1)}\right) \sum_{t=\left\lceil\varepsilon_1^3 T\right\rceil}^{\infty} \frac{1}{t^{2+\alpha}} \\
& \leq h\left(\frac{\xi_1(\varepsilon)}{4(K-1)}\right) \int_{\left\lceil\varepsilon_1^3 T\right\rceil-1}^{\infty} \frac{1}{t^{2+\alpha} d t} \\
& \leq h\left(\frac{\xi_1(\varepsilon)}{4(K-1)}\right) \frac{1}{(1+\alpha)\left(\left\lceil\varepsilon_1^3 T\right\rceil-1\right)^{1+\alpha}} .
\end{aligned}
$$

This clearly shows that $\sum_{T=T_0 \vee T_1}^{\infty} \mathbb{P}\left(\mathcal{E}_{1,\left\lceil\varepsilon_1^3 T\right\rceil}^{\mathrm{c}}\right)<\infty$.
Second, we observe, using a union bound, Lemma 5 and Lemma 4 from \cite{jedra2020optbailinear}, that there exists strictly positive constants $c_3, c_4$ that are independent of $\varepsilon$ and $T$, and such that
$$
\begin{aligned}
\mathbb{P}\left(\mathcal{E}_{2,\left\lceil\varepsilon_1^3 T\right]}^{\mathrm{c}}\right) & \leq \sum_{t=\ell\left(\left[\epsilon_1^3 T\right]\right)}^{\infty} \mathbb{P}\left(\left\|\theta-\hat{\theta}_t\right\|>\xi(\varepsilon)\right) \\
& \leq c_3 \sum_{t=\ell\left(\left\lceil\varepsilon_1^3 T\right\rceil\right)}^{\infty} t^{d / 4} \exp \left(-c_4 \xi(\varepsilon)^2 \sqrt{t}\right) .
\end{aligned}
$$

For $t$ large enough, the function $t \mapsto t^{d / 4} \exp \left(-c_4 \xi(\varepsilon)^2 \sqrt{t}\right)$ becomes decreasing. Additionally, we have by assumption that $(\ell(t))_{t \geq 1}$ is a non decreasing and that $\lim _{t \rightarrow \infty} \ell(t)=\infty$, thus we may find $T_2>T_0 \vee T_1$ such that for all $T \geq T_2$, the function $t \mapsto t^{d / 4} \exp \left(-c_4 \xi(\varepsilon)^2 \sqrt{t}\right)$ is decreasing on $\left[\ell\left(\varepsilon_1^3 T\right)-1, \infty\right)$. Hence, for $T \geq T_2$, we have
$$
\mathbf{P}\left(\mathcal{E}_{2,\left\lceil\varepsilon_1^3 T\right\rceil}^c\right) \leq c_3 \int_{\ell\left(\left\lceil\varepsilon_1^3 T\right\rceil\right)-1}^{\infty} t^{d / 4} \exp \left(-c_4 \xi(\varepsilon)^2 \sqrt{t}\right) d t .
$$

Furthermore, for some $T_3 \geq T_2$ large enough, we may bound the integral for all $T \geq T_3$ as follows
$$
\int_{\ell\left(\left\lceil\varepsilon_1^3 T\right\rceil\right)-1}^{\infty} t^{d / 4} \exp \left(-c_1 \xi(\varepsilon)^2 \sqrt{t}\right) d t \lesssim \frac{\ell\left(\left(\left\lceil\varepsilon_1^3 T\right\rceil\right)-1\right)^{d / 2+1}}{\xi(\varepsilon)^4 \exp \left(c_4 \xi(\varepsilon)^2 \sqrt{\left.\ell\left(\lceil \varepsilon_1^3 T\right\rceil\right)-1}\right)} .
$$

We spare the details of this derivation as the constants are irrelevant in our analysis. Essentially, the integral can be expressed through the upper incomplete Gamma function which can be upper bounded using some classical inequalities $[23,24]$. We then obtain that for $T \geq T_3$,
$$
\mathbb{P}\left(\mathcal{E}_{2, \left\lceil \varepsilon_1^3 T\right\rceil}^{\mathrm{c}}\right) \lesssim \frac{\ell\left(\left(\left\lceil \varepsilon_1^3 T\right\rceil\right)-1\right)^{d / 2+1}}{\xi(\varepsilon)^4 \exp \left(c_4 \xi(\varepsilon)^2 \sqrt{\ell\left(\left\lceil \varepsilon_1^3 T\right\rceil\right)-1}\right)} .
$$

Now, the lazy condition \ref{eq:lazy_cond_2} ensures that $\lim _{t \rightarrow \infty} \ell(t) / t^\gamma>0$ for some $\gamma \in(0,1)$ and $\ell(t) \leq t$. Thus there exists $T_4 \geq T_3$ such that for all $T \geq T_4$,
$$
\mathbb{P}\left(\mathcal{E}_{2,\left\lceil\varepsilon_1^3 T\right\rceil}^c\right) \lesssim \frac{\ell\left(\left(\left\lceil\varepsilon_1^3 T\right\rceil\right)-1\right)^{d / 2+1}}{\xi(\varepsilon)^4 \exp \left(c_4 \xi(\varepsilon)^2 \sqrt{\ell\left(\left\lceil\varepsilon_1^3 T\right\rceil\right)-1}\right)} \lesssim \frac{T^{d / 2+1}}{\exp \left(c_5(\varepsilon) T^{\gamma / 2}\right)} .
$$

This shows that
$$
\begin{aligned}
\sum_{T=T_0 \vee T_1}^{\infty} \mathbb{P}\left(\mathcal{E}_{2,\left\lceil\varepsilon_1^3 T\right\rceil}^c\right) & =\sum_{T=T_0 \vee T_1}^{T_4} \mathbb{P}\left(\mathcal{E}_{2,\left\lceil\varepsilon_1^3 T\right\rceil}^c\right)+\sum_{T=T_4+1}^{\infty} \mathbb{P}\left(\mathcal{E}_{2,\left\lceil\varepsilon_1^3 T\right\rceil}^c\right) \\
& \lesssim \sum_{T=T_0 \vee T_1}^{T_4} \mathbb{P}\left(\mathcal{E}_{2,\left\lceil\varepsilon_1^3 T\right\rceil}^c\right)+\sum_{T=T_4+1}^{\infty} \frac{T^{d / 2+1}}{\exp \left(c_5(\varepsilon) T^{\gamma / 2}\right)} \\
& <\infty
\end{aligned}
$$
where the last inequality follows from the fact that we can upper bound the infinite sum by a Gamma function, which is convergent as long as $\gamma>0$.

Finally, we have thus shown that
$$
\sum_{T=T_0 \vee T_1+1}^{\infty} \mathbb{P}\left(\mathcal{E}_T^c\right)<\infty
$$

We note that this infinite sum depends on $(\ell(t))_{t \geq 1}$ and $\varepsilon$ only.

\textbf{Last step:} Finally, we have shown that for all $\varepsilon>0$
$$
\mathbb{E}[\tau] \leq \frac{c_1}{1-\varepsilon} T_\theta^{\star} \log (1 / \delta)+o(\log (1 / \delta))+T_0 \vee T_1+\sum_{T=T_0 \vee T_1}^{\infty} \mathbb{P}\left(\mathcal{E}_T^c\right)
$$
where $\sum_{T=T_0 \vee T_1}^{\infty} \mathbb{P}\left(\mathcal{E}_T^c\right)<\infty$ and is independent of $\delta$. Hence,
$$
\limsup _{\delta \rightarrow 0} \frac{\mathbb{E}\left[\tau_\delta\right]}{\log (1 / \delta)} \leq \frac{c_1}{1-\varepsilon} T_\theta^{\star}
$$

Letting $\varepsilon$ tend to 0 and recalling that $c_1 \lesssim \sigma^2$, we conclude that
$$
\limsup _{\delta \rightarrow 0} \frac{\mathbb{E}\left[\tau_\delta\right]}{\log (1 / \delta)} \lesssim \sigma^2 T_\theta^{\star}
$$

\end{proof}

\end{document}